\def\g{\gamma}
\def\o{\omega}
\def\a{\alpha}
\def\l{\ell}
\def\G{\Gamma}
\def\I{\text{IG}}
\def\max{\text{max}}
\def\E{\mathbb{E}} %
\def\R{\mathbb{R}}
\def\N{\mathbb{N}}
\def\F{\mathcal{F}}
\def\A{\mathcal{A}}
\def\Z{\mathbb{Z}}
\def\I{\text{IG}}
\def\EG{\text{EG}}
\def\pxi{\frac{\partial F}{\partial x_i}}
\theoremstyle{definition}
\newtheorem{theorem}{Theorem}
\newtheorem{conjecture}{Conjecture}
\newtheorem{claim}{Claim}
\newtheorem{lemma}{Lemma}
\newtheorem{corollary}{Corollary}
\newtheorem{definition}{Definition}
\title{A Rigorous Study of Integrated Gradients Method and Extensions to Internal Neuron Attributions}
\author{ {Daniel Lundstrom} \\
	University of Southern California\\
	\texttt{lundstro@usc.edu} \\
	\And
    {Tianjian Huang} \\
	University of Southern California\\
	\texttt{tianjian@usc.edu} \\
	\And
    {Meisam Razaviyayn} \\
	University of Southern California\\
	\texttt{razaviya@usc.edu} \\
}
\date{}
\begin{document}
\maketitle

\begin{abstract}

As deep learning (DL) efficacy grows, concerns for poor model explainability grow also. Attribution methods address the issue of explainability by quantifying the importance of an input feature for a model prediction. Among various methods, Integrated Gradients (IG) sets itself apart by claiming other methods failed to satisfy desirable axioms, while IG and methods like it uniquely satisfy said axioms. This paper comments on fundamental aspects of IG and its applications/extensions: 1) We identify key differences between IG function spaces and the supporting literature’s function spaces which problematize previous claims of IG uniqueness. We show that with the introduction of an additional axiom, \textit{non-decreasing positivity}, the uniqueness claims can be established. 2) We address the question of input sensitivity by identifying function classes where IG is/is not Lipschitz in the attributed input. 3) We show that axioms for single-baseline methods have analogous properties for methods with probability distribution baselines. 4) We introduce a computationally efficient method of identifying internal neurons that contribute to specified regions of an IG attribution map. Finally, we present experimental results validating this method.\footnote{This work was supported in part with funding from the USC-Meta Center for Research and Education in AI \& Learning (REAL@USC center).}
\end{abstract}

\section{Introduction}

Deep neural networks have revolutionized the field of vision processing, showing marked accuracy for varied and large scale computer vision tasks \citep{huang2017densely}, \citep{ren2016faster}, \citep{bochkovskiy2020yolov4}. At the same time, deep neural networks suffer from a lack of interpretability. Various methods have been developed to address the interpretability problem by quantifying, or attributing, the importance of each input feature to a model's output. Basic techniques inspect the gradient of the output with respect to an input~\citep{baehrens2010explain}. Deconvolutional networks~\citep{zeiler2014visualizing} employ deep networks to produce attributions, while guided back-propagation~\citep{springenberg2014striving} gives attributions to internal neuron activations. Methods such as Deeplift~\citep{shrikumar2017learning} and Layer-wise relevance propagation~\citep{binder2016layer} employ a baseline to use as a comparison to the input (called baseline attributions). Further methods include \citet{zhou2016learning}, \citet{zintgraf2016new}. %

\citet{sundararajan2017axiomatic} introduced the baseline attribution method of Integrated Gradients (IG). The paper identified a set of desirable axioms for attributions, demonstrated that previous methods fail to satisfy them, and introduced the IG method which satisfied the axioms. Included was the claim that any method satisfying a subset of the axioms must be a more general form of IG (called path methods).

\noindent\textbf{Contributions.} This paper addresses multiple aspects of the IG method: its foundational claims, mathematical behavior, and extensions. The IG paper of~\citet{sundararajan2017axiomatic} applies results from \citet{friedman2004paths} (given here as Theorem~\ref{thm1}) to claim that path methods (defined below) are the only methods that satisfy a set of desirable axioms. Upon inspection, we observe that there are key assumptions of the function spaces of \citet{friedman2004paths}, such as functions being non-decreasing, which are not true in the DL context. These differences in function spaces were unaddressed in \citet{sundararajan2017axiomatic}. We show that because the function spaces differ, Theorem~\ref{thm1} does not apply and the uniqueness claim is false. This observation also invalidates other uniqueness claims found in \citet{xu2020attribution} and \citet{sundararajan2020many}. With the introduction of an additional axiom, \textit{non-decreasing positivity} (NDP), we show that Theorem~\ref{thm1} can apply, and rigorously extend it into a broad-ranging DL function space.

We address the mathematical behavior of IG and an extension. We identify a common class of functions where IG may be hypersensitive to the input image by failing to be Lipschitz continuous, as well as a function class where IG is guaranteed to be Lipschitz continuous. We also note that the axioms in~\citet{sundararajan2017axiomatic} apply to single baseline attribution methods, but no such axioms have been stated for methods that employ a distribution of baselines. We identify/extend axioms for the distribution of baselines methods that parallel those in the single baseline case.

Lastly, we introduce a computationally efficient method of attributing to an internal neuron it's contribution to a region of the IG map. If an IG map indicated certain regions or sub-features are important to an output, this method provides a means of inspecting which individual neurons are responsible for that region or sub-feature. %

\section{Background}

\subsection{Attribution Methods and Integrated Gradient}
For $a,b\in\R^n$, define $[a,b]$ to be the hyperrectangle with $a,b$ as opposite vertices. An example is a greyscale image, which would have $a=0, b$ be the vectorized pixel values of a white image, and $n$ be the pixel count. We denote a class of functions $F:[a,b]\rightarrow \R$ by $\F(a,b)$, or $\F$ if $a,b$ may be inferred from the context. An example is a single output of a DL  model classifying  images in a classification task. A \textit{baseline attribution method} (BAM) is defined as follows:
\begin{definition}[Baseline Attribution Method]\footnotemark{}
Given $x,x'\in [a,b]$, $F\in \F(a,b)$, a baseline attribution method is any function of the form $A:[a,b]\times[a,b]\times \F(a,b)\rightarrow\R^n$.
\end{definition}

We may drop $x'$ and write $A(x,F)$ if the baseline is fixed, or may be inferred. An attribution can be interpreted as assigning values to each input $x_i$ indicating $x_i$'s contribution to the model output, $F(x)$. Obviously, many BAMs in the function class do not practically track an input's contribution to an output. By considering properties desirable to an attribution method, we may restrict the function space further to those which more effectively track input contributions. 
To this end, let us define a \textit{path function} as follows:

\begin{definition}[Path Function]
A function $\g(x,x',t):[a,b]\times[a,b]\times[0,1]\rightarrow [a,b]$ is a path function if, for fixed $x,x'$, $\g(t):=\g(x,x',t)$ is a continuous, piecewise smooth curve from $x'$ to $x$.
\end{definition}

\footnotetext{It is possible to widen the definition of baseline attribution methods to include the model's implementation, not just the input and output. We use our definition for the scope of the paper.}

We may drop $x'$ or both $x,x'$ when they are fixed, and write $\g(x,t)$, $\g(t)$ respectively. If we further suppose that $\pxi(\g(t))$ exists almost everywhere\footnotemark{}, then the \textit{path method} associated with $\g$ can be defined as:
\begin{definition}[Path Method] Given the path function~$\g(\cdot,\cdot,\cdot)$, the corresponding path method is defined as
\small
\begin{equation}\label{eq:PathMethod}
    A^\g(x,x',F) = \int_0^1 \frac{\partial F}{\partial x_i} (\g(x,x',t)) \times \frac{\partial \g_i}{\partial t} (x,x',t) dt,
\end{equation}
\normalsize
where $\g_i$ denotes the $i$-th entry of $\g$.
\end{definition}
By definition, all path methods are baseline attribution methods. A \textit{monotone path method} is a path attribution where each path is monotone, i.e., $\g_i(t)$ is monotone in $t$  for all $i$. %
\footnotetext{A function exists almost everywhere if the set of points where the function is not defined has Lebesgue measure 0.}

The \textit{integrated gradient} method is a path method where the path is a straight line from $x'$ to $x$. Formally, choosing the monotone path $\g(t) = x' + t(x-x')$ yields the IG formula: 

\begin{definition}[Integrated Gradient Method\footnotemark{}]%
Given $x,x'\in [a,b]$, and $F\in \F(a,b)$, the integrated gradient attribution of the $i$-th component of $x$ is defined as
\begin{equation} \label{eq:IG} 
    \text{IG}_i(x,x',F) = (x_i-x_i')\int_0^1 \frac{\partial F}{\partial x_i} (x' + t(x-x')) dt
\end{equation}
\normalsize
\end{definition}
\citet{sundararajan2017axiomatic} uses a black baseline, i.e., $x'=0$. IG corresponds to the Aumann-Shaply  method in the cost-sharing literature~\cite{aumann1974values}. 

\footnotetext{Practically speaking, IG is relatively easy to implement. The IG is calculated by numerical integration with a recommended 20 to 300 calls of the gradient~\citep{sundararajan2017axiomatic}.}

\subsection{What makes IG unique?}
The theoretical allure of IG stems from three key claims: 1) IG satisfies stipulated axioms (desirable properties), 2) other  methods fail at least one of the  axioms, and 3) only methods like it (path methods) are able to satisfy these axioms. We will review the stated axioms in~\citet{sundararajan2017axiomatic}, and a reader can find an explanation of each axiom in appendix~\ref{ExplanationOfAxioms}. Let $A$ be a BAM, $x,x'\in [a,b]$, $F,G\in \F$. Then the axioms are as follows:

\begin{enumerate}
    \item \textit{Sensitivity(a)}: Suppose $x,x'$ vary in one component, so that $x_i \neq x_i'$, and $x_j = x_j'$ $\forall j\neq i$. Further suppose $F(x)\neq F(x')$. Then $A_i(x,x',F) \neq 0$.
    \item \textit{Implementation Invariance}: $A$ is not a function of model implementation, but solely of the mathematical mapping of the model's domain to the range.
    \item \textit{Completeness}: $\forall F\in\F$, $x, x'\in [a,b]$, we have: $\sum_{i=1}^{n} A_i(x,x',F) = F(x)-F(x')$.
    \item \textit{Linearity}: For $\alpha,\beta \in \R$, we have: $A_i(x,x',\alpha F+\beta G) = \alpha A_i(x,x',F) + \beta A_i(x,x',G)$.
    \item \textit{Sensitivity(b)/Dummy}: $\forall F\in \F$, if $\partial_i F \equiv 0$, then $A_i(x,x',F) = 0$.
    \item \textit{Symmetry Preserving}: For a given $(i,j)$, define $x^*$ by swapping the values of $x_i$ and $x_j$. Now suppose that $\forall x\in [a,b]$, $F(x) = F(x^*)$. Then whenever $x_i=x_j$ and $x'_i=x'_j$, we have $A_i(x,x',F) = A_j(x,x',F)$.
\end{enumerate}

The argument in~\citet{sundararajan2017axiomatic} is roughly as follows: other established methods fail to satisfy sensitivity(a) or implementation invariance. IG satisfies: completeness, a stronger claim that includes sensitivity(a); implementation invariance; linearity; and sensitivity(b). It can be shown that path methods are the unique methods that satisfy implementation invariance, sensitivity(b), linearity, and completeness. IG is the unique path method that satisfies symmetry. Thus, IG uniquely satisfies axioms 1-6. It was admitted that the Shaply-Shubik method~\citep{shapley1971assignment} also satisfy these conditions, but it is computationally infeasible.

It should be noted that \citep{lerma2021symmetry} pointed out that other computationally feasible path methods (single path methods) satisfying all axioms exist and are easy to produce, although they are not as simple as IG. It should also be noted that other axiomatic treatments of IG exist. \citet{sundararajan2020many} introduced an alternative set of axioms and claimed that IG uniquely satisfied them. \citet{xu2020attribution} claimed that path methods uniquely satisfy linearity, dummy, completeness, and an additional axiom. These treatments will be discussed later.

\subsection{Modifications and Extensions}\label{sectionMod&Extend}

One issue with IG is the \underline{noisiness of the attribution}. Sharp fluctuations in the gradient, sometimes called the shattered gradient problem \citep{balduzzi2017shattered}, are generally blamed. Another issue with integrated gradients is \underline{baseline choice}. If the baseline is a black image, then the $(x_i-x_i')$ term will be zero for any black pixel in the input image, causing those attributions to be zero. This is an issue if the black input pixels do contribute to image recognition, such as a model identifying an image of a blackbird.

A category of fixes to these issues rely on modifying the choice of input and baseline. \citet{smilkov2017smoothgrad} addresses the noisiness issue by introducing noise into the input and taking the average IG. A tutorial on~\citet{tensorflowIG} addresses baseline choice by averaging the results when using a white and black baseline. \citet{erion2021improving} claims synthetic baselines such as black and white images are out of distribution data points, and suggests using training images as baseline and taking the average. \citet{VisualizingBaselines} investigates various fixed and random baselines using blurring, Gaussian noise, and uniform noise.

Another category of fixes modifies the IG path. \citet{kapishnikov2021guided} identifies accumulated noise along the IG path as the cause of noisy attributions, and employs a guided path approach to reduce attribution noise. \citet{xu2020attribution} is concerned with the introduction of information using a baseline, and opts to use a path that progressively removes Gaussian blur from the attributed image.

Some IG extensions employ it for tasks beyond input attributions. \citet{dhamdhere2018important} calculates accumulated gradient flow through neurons to produce an internal neuron attribution method called conductance. \citet{shrikumar2018computationally} later identifies conductance as an augmented path method and provides a computational speedup. \citet{lundstrom2022explainability} compares a neuron's average attribution over different image classes to characterize neurons as class discriminators. \citet{erion2021improving} incorporates IG attributions in a regularization term during training to improve the quality of attributions and model robustness.

\section{Remarks on Original Paper and Other Uniqueness Claims}
\subsection{Remarks on Completeness, Path Definition}
We first address a few claims of the original IG paper \citep{sundararajan2017axiomatic} to add mathematical clarifications.  \citet[Remark 2]{sundararajan2017axiomatic}  states:

\begingroup
\addtolength\leftmargini{-0.18in}
\begin{quote}
\it
``Integrated gradients satisfies Sensivity(a) because Completeness implies Sensivity(a) and is thus a
strengthening of the Sensitivity(a) axiom. This is because
Sensitivity(a) refers to a case where the baseline and the
input differ only in one variable, for which Completeness
asserts that the difference in the two output values is equal
to the attribution to this variable."
\end{quote}
\endgroup

To clarify, completeness implies sensitivity(a) for IG, and for monotone path methods in general. The form of IG guarantees that any input that does not differ from the baseline will have zero attribution, due to the $x_i-x_i'$ term in \eqref{eq:IG}. If only one input differs from the baseline, and $F(x) \neq F(x')$, then the value $F(x)-F(x')\neq 0$ will be attributed to that input by completeness. However, completeness does not imply sensitivity(a) for general attribution methods, or for non-monotone path methods specifically.

In  \citet{sundararajan2017axiomatic}, monotone path methods (what they simply term path methods) are introduced as a generalization of the IG method. The section reads:

\begingroup
\addtolength\leftmargini{-0.18in}
\begin{quote}
\it ``Integrated gradients aggregate the gradients along the inputs that fall on the straightline between the baseline and the input. There are many other (non-straightline) paths that monotonically interpolate between the two points, and each such path will yield a different attribution method. For instance, consider the simple case when the input is two dimensional. Figure 1\footnotemark{} has examples of three paths, each of which corresponds to a different attribution method.

Formally, let $\g = (\g_1, . . . , \g_n) : [0, 1] \rightarrow \R^n$ be a smooth
function specifying a path in $\R^n$ from the baseline $x'$ to the input $x$, i.e., $\g(0) = x'$, and $\g(1) = x$."
\end{quote}
\endgroup

\footnotetext{See Figure~\ref{fig:classExamples}, Appendix~\ref{AAfig1}.}

By the referred figure, $P_1$ is identified as a path, but it is not smooth. It is simple enough to interpret smooth here to mean piecewise smooth. Note  that monotonicity is mentioned, and all examples in Figure 1 are monotone, but monotonicity is not explicitly included in the formal definition. The cited source on path methods, \citet{friedman2004paths}, only considers monotone paths. Thus, we assume that~\citet{sundararajan2017axiomatic} only considers monotone paths. The alternative is addressed in the discussion of Conjecture~\ref{conj1}.

\subsection{On \citet{sundararajan2017axiomatic}'s Uniqueness Claim}

In the original IG paper, an important uniqueness claim is given as follows \citep[Prop 2]{sundararajan2017axiomatic}:

\begingroup
\addtolength\leftmargini{-0.18in}
\begin{quote}
\it ``\citep{friedman2004paths} Path methods are the only attribution methods that always satisfy Implementation Invariance, Sensitivity(b), Linearity, and Completeness."
\end{quote}
\endgroup

The claim that every method that satisfies certain axioms must be a path method is an important claim for two reasons: 1) It categorically excludes every method that is not a path method from satisfying the axioms, and 2) It characterizes the form of methods satisfying the axioms. However, no proof of the statement is given, only the following remark (Remark 4):

\begingroup
\addtolength\leftmargini{-0.18in}
\begin{quote}
 \it   %
``Integrated gradients correspond to a cost-sharing method called Aumann-Shapley \citep{aumann1974values}. Proposition 2 holds
for our attribution problem because mathematically the
cost-sharing problem corresponds to the attribution problem with the benchmark fixed at the zero vector."
\end{quote}
\endgroup

The cost-sharing problem does correspond to the attribution problem with benchmark fixed at zero, \textit{with some key differences}. To understand the differences, we review the cost-share problem and  results in \citet{friedman2004paths}, then rigorously state  \citet[Proposition~2]{sundararajan2017axiomatic}. We will then point out discrepancies between the function spaces that make the application of the results in \citet{friedman2004paths} neither automatic nor, in one case, appropriate.

In \citet{friedman2004paths}, attributions are discussed within the context of the cost-sharing problem. Suppose $F$ gives the cost of satisfying the demands of various agents, given by $x$. Each input $x_i$ represents an agent's demand, $F(x)$ represents the cost of satisfying all demands, and the attribution to $x_i$ represents that agent's share in the total cost. It is assumed $F(0) = 0$, naturally, and because increased demands cannot result in a lower total cost, $F(x)$ is non-decreasing in each component of $x$. Furthermore, only $C^1$ cost functions are considered. To denote these restrictions on $F$ formally, we write that for a positive vector $a\in \R_+^n$, the set of attributed functions for a cost-sharing problem is denoted by $\F^0 = \{F\in\F(a,0)|F(0) = 0, F \in C^1, F \text{ non-decreasing in each component}\}$. There are also restrictions on attribution functions. The comparative baseline in this context is no demands, so $x'$ is fixed at 0. Because an agent's demands can only increase the cost, an agent's demands should only have positive cost-share. Thus cost-shares are non-negative. Formally we denote the set of baseline attributions in \citet{friedman2004paths} by $\A^0 = \{A:[a,0] \times \F^0 \rightarrow \R_+^n \}$.

Before we continue, we must define an ensemble of path methods. Let $\G(x,x')$ denote the set of all path functions projected onto their third component, so that $x,x'$ are fixed and $\g\in\G(x,x')$ is a function solely of $t$. %
We may write $\G(x)$ when $x'$ is fixed or apparent. Define the set of monotone path functions as $\G^m(x,x'):=\{\g \in \G(x,x') | \g \text{ is monotone in each component}  \}$. We can then define an \textit{ensemble of path methods}:
\begin{definition}\label{eq:ensemble}
A BAM $A$ is an ensemble of path methods if there exists a family of probability measures indexed by $x,x'\in[a,b]$, $\mu^{x,x'}$, each on $\G(x,x')$, such that:
\begin{equation}
    A(x,F) = \int_{\g \in \G(x)} A^{\g}(x,F) d\mu^x(\g)
\end{equation}
\end{definition}

An ensemble of path methods is an attribution method where, for a given $x'$, the attribution to $x$ is equivalent to an average among a distribution of path methods, regardless of $F$. This distribution depends on the fixed $x'$ and choice of $x$. If we only consider monotone paths, then we say that a BAM A is an ensamble of monotone path methods, and swap $\G(x)$ for $\G^m(x)$.

We now present Friedman's characterization theorem:

\begin{theorem}{(\citet[Thm 1]{friedman2004paths})}\label{thm1}

The following are equivalent:
\begin{enumerate}
    \item $A\in \A^0$ satisfies completeness, linearity\footnotemark{}, and sensitivity(b).
    \item $A\in \A^0$ is an ensemble of monotone path methods.
\end{enumerate}
\end{theorem}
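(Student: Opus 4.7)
The plan is to prove the equivalence in two directions, with the forward direction $(2) \Rightarrow (1)$ being routine and the reverse direction $(1) \Rightarrow (2)$ requiring the bulk of the work. For $(2) \Rightarrow (1)$, I would first verify that each monotone path method $A^\g$ lies in $\A^0$ and satisfies the three axioms, and then argue that the probabilistic average preserves these properties. Completeness follows from the fundamental theorem of calculus applied along $\g$, giving $\sum_i \int_0^1 \frac{\partial F}{\partial x_i}(\g(t)) \frac{\partial \g_i}{\partial t}(t)\, dt = F(\g(1)) - F(\g(0)) = F(x)$, since $\g(0)=0$ and $F(0)=0$ on $\F^0$. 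Linearity is immediate from linearity of the integral, and sensitivity(b) holds because if $\partial_i F \equiv 0$ then the $i$-th integrand vanishes identically. Non-negativity of $A^\g$ (so that the codomain is indeed $\R_+^n$) follows because $F$ is non-decreasing in each coordinate and $\g_i$ is monotone, forcing the integrand to have a consistent sign. Integrating against $\mu^x$ preserves all four properties.

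For $(1) \Rightarrow (2)$: the idea is to extract a representing measure using linearity plus the structural constraints. First, I would strengthen finite linearity to an integral form by a density/continuity argument, so that whenever $F(y) = \int g(y,s)\, d\nu(s)$ with each $g(\cdot,s) \in \F^0$, one has $A(x,F) = \int A(x,g(\cdot,s))\, d\nu(s)$. Next, fix $x$ and decompose a general $F \in \F^0$ via a layer-cake / superposition representation as an integral of simple non-decreasing $C^1$ ``ramp'' or ``bump'' building blocks supported on axis-aligned rectangular regions. For each such building block, sensitivity(b) forces the attribution to be concentrated on the coordinates along which the block actually varies, while completeness pins down the total to $F(x)$; combining these constraints coordinate-by-coordinate lets one write the attribution on each building block as a path integral against some monotone curve $\g \in \G^m(x)$. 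Re-assembling the weighting across building blocks then produces the desired probability measure $\mu^x$ on $\G^m(x)$.

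The main obstacle will be the last step: building a consistent family of measures $\mu^x$ and showing that it is supported on \emph{monotone} paths rather than on arbitrary paths that could violate monotonicity in some coordinate. This is precisely where the restriction to $\F^0$ and $\A^0$ pays off --- non-negativity of $A$ together with non-decreasingness of $F$ excludes any candidate $\g$ with a strictly decreasing coordinate, because one could then exhibit an $F \in \F^0$ whose gradient forces $A_i(x,F) < 0$, contradicting $A \in \A^0$. Making this exclusion fully rigorous, handling the boundary case where some $\g_i$ is constant on a subinterval, and invoking an appropriate Riesz/Choquet-style representation to turn the linear functional $F \mapsto A(x,F)$ into an integral against $\mu^x$, is the technical core of the argument; at this point one would rely on the measure-theoretic machinery of \citet{friedman2004paths} rather than attempt a from-scratch derivation.
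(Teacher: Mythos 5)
You should first be aware that the paper offers no proof of Theorem~\ref{thm1}: it is imported verbatim from \citet{friedman2004paths} and used as a black box (the paper's own work begins where this theorem is \emph{applied}, in Theorems~\ref{thm2} and~\ref{thm3}). So there is no in-paper argument to compare yours against; what can be judged is whether your sketch would stand as an independent derivation, and in its present form it would not.

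Your direction $(2)\Rightarrow(1)$ is fine: completeness via the fundamental theorem for line integrals, linearity and sensitivity(b) from linearity of the integral, non-negativity from the sign agreement of $\pxi\geq 0$ and $\partial\g_i/\partial t\geq 0$ for monotone $\g$, all preserved under averaging against $\mu^x$. The gaps are in $(1)\Rightarrow(2)$. First, upgrading finite additivity to the integral form $A(x,\int g(\cdot,s)\,d\nu(s))=\int A(x,g(\cdot,s))\,d\nu(s)$ is not a free ``density/continuity argument'': linearity alone carries no continuity. The needed boundedness comes from positivity combined with dummy --- the functional $F\mapsto A_i(x,F)$ depends only on $\pxi$ and is monotone in it under the pointwise order, which is exactly the device the paper itself uses (compare Lemma~\ref{lemma1} and the positivity trick in Lemma~\ref{lemma3}) --- and you need to make that the engine of the limit interchange. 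Second, the claim that completeness and dummy ``pin down the attribution on each building block as a path integral against some monotone curve'' is too strong: for a single building block the attribution is in general itself an \emph{ensemble} of paths, not one path; what Friedman's argument actually produces is a measure on monotone staircase paths for each grid discretization, followed by a weak-$*$ compactness/consistency argument as the grid is refined. Your closing sentence concedes that this step is delegated to ``the measure-theoretic machinery of \citet{friedman2004paths}'' --- but that machinery \emph{is} the theorem, so what you have is a plausible roadmap of the cited proof rather than a proof.
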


\footnotetext{\citet{friedman2004paths} uses a weaker form of linearity: $A(x,F+G) = A(x,F) + A(x,G)$.}

To rigorously state \citet[Prop 2]{sundararajan2017axiomatic}, we must interpret the claim: ``path methods are the only attribution methods that always satisfy implementation invariance, sensitivity(b), linearity, and completeness." By ``path methods", \citet{sundararajan2017axiomatic} cannot exclude ensembles of path methods. Simply stated: if some path methods satisfy the axioms, then some ensembles of path methods, such as finite averages, satisfy the axioms also. Neither can it mean non-monotone path methods, since Theorem~\ref{thm1} only addresses monotone path methods, and, supposedly, the theorem applies immediately. Thus we will interpret ``path methods" as in Theorem~\ref{thm1}, as an ensemble of monotone path methods. Define $\F^\text{D}$ to be the set of DL models where one output is considered, and define $\A^\text{D}$ to be the set of attribution methods defined on $\F^\text{D}$. We now state the characterization theorem  in \citet{sundararajan2017axiomatic}:

\begin{claim}\label{Claim:FalsePathThm}
{(\citet[Prop 2]{sundararajan2017axiomatic})}\label{claim1}
Suppose $A\in \A^\text{D}$ satisfies completeness, linearity, sensitivity(b), and implementation invariance. Then for any fixed $x'$, $A(x,x',F)$ is an ensemble of monotone path methods. 
\end{claim}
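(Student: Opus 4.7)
The plan is to refute Claim~\ref{claim1} by exhibiting an attribution method in $\A^\text{D}$ that satisfies all four axioms but fails to be an ensemble of monotone path methods. My vehicle will be a single \emph{non-monotone} path method. A direct inspection of the integral in~\eqref{eq:PathMethod} shows that any piecewise smooth path $\g$ yields an $A^\g$ satisfying: implementation invariance (the integrand depends only on $F$ as a mathematical object), linearity (of integration and differentiation), sensitivity(b) ($\partial_i F \equiv 0$ kills the integrand), and completeness (by the chain rule and the fundamental theorem of calculus applied to $F\circ\g$, giving $\sum_i A_i^\g = F(x)-F(x')$). Crucially, none of these verifications invokes monotonicity of $\g$.

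Concretely, I work in $[a,b]=[0,1]^2$ with baseline $x'=(0,0)$ and input $x=(1,1)$. Let $\g$ be the piecewise linear curve that first traverses the boundary of the unit square clockwise, $(0,0)\to(0,1)\to(1,1)\to(1,0)\to(0,0)$, and then proceeds along the diagonal to $(1,1)$. This is a legitimate path function, remains in $[0,1]^2$, and is non-monotone in each coordinate. On the test function $F(x_1,x_2)=x_1 x_2$, we have $\partial_1 F = x_2$, and a segment-by-segment computation (equivalently, Green's theorem applied to the clockwise loop, contributing $+1$, plus $\int_0^1 s\, ds = 1/2$ on the diagonal) gives
\begin{equation*}
    A_1^\g(x,x',F) \;=\; 1 + \tfrac{1}{2} \;=\; \tfrac{3}{2},
\end{equation*}
and hence $A_2^\g(x,x',F) = -\tfrac{1}{2}$ by completeness.

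The decisive step is that $\tfrac{3}{2}$ is unattainable by any ensemble of monotone path methods on this $F$. For any monotone $\tilde\g$ from $(0,0)$ to $(1,1)$ staying in $[0,1]^2$, we have $\tilde\g_1'(t) \ge 0$ and $\tilde\g_2(t) \in [0,1]$, so
\begin{equation*}
    A_1^{\tilde\g}(x,x',F) \;=\; \int_0^1 \tilde\g_2(t)\,\tilde\g_1'(t)\,dt \;\in\; [0,1].
\end{equation*}
By Definition~\ref{eq:ensemble}, any ensemble attribution is a convex average of such values and therefore also lies in $[0,1]$, which excludes $\tfrac{3}{2}$. Hence $A^\g$ satisfies the four axioms but is not an ensemble of monotone path methods, contradicting Claim~\ref{claim1}.

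I expect the main obstacle to be conceptual rather than computational: one must pin down the meaning of ``path methods'' in~\citet[Prop~2]{sundararajan2017axiomatic} tightly enough that a non-monotone single path method genuinely falls outside the target class. The paper's reliance on Theorem~\ref{thm1}, which concerns ensembles of \emph{monotone} paths, forces this reading, so the excerpt has already done most of that work. A minor technical care is that $\partial_i F(\g(t))$ must exist almost everywhere along the path; this is automatic since $F = x_1 x_2$ is smooth and $\g$ is piecewise linear.
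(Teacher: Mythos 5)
Your proposal is correct and matches the paper's own refutation in all essentials: both disprove the claim by exhibiting a non-monotone path method on $F(x_1,x_2)=x_1x_2$ over $[0,1]^2$ that satisfies the four axioms yet produces an attribution no ensemble of monotone path methods can produce (the paper's Appendix~\ref{claim1ce} uses $x=(1,0)$ and a sign violation, since monotone paths on a non-decreasing $F$ force non-negative attributions, while you use $x=(1,1)$ and a range violation, $3/2\notin[0,1]$). The computations and the bounding argument for monotone ensembles both check out.
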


As stated previously, there are several discrepancies between the function classes of Theorem~\ref{thm1} and Claim~\ref{Claim:FalsePathThm}. $F\in \F^\text{D}$ need not be non-decreasing nor $C^1$. $x'$ need not be $0$, and $F(x')$ has no restrictions. Additionally, attributions in $\A^{\text{D}}$ can take on negative values while those in $\A^0$ can not. The differences between $\F^0$ and  $\F^\text{D}$, $\A^0$ and  $\A^\text{D}$ make the application of Theorem~\ref{thm1} problematic in the DL context. In fact, Claim~\ref{Claim:FalsePathThm} is actually false. 

Note that monotone and non-monotone path methods satisfy completeness\footnotemark{}, linearity, sensitivity(b), and implementation invariance. Fixing the baseline to zero and $[a,b]=[0,1]^n$, there exists a non-monotone path $\o(t)$ and non-decreasing $F$ s.t. $A^\o(x,x',F)$ has negative components. However, if path $\g(t) = \g(x,x',t)$ is monotone and $F$ is non-decreasing, $\frac{\partial \g_i}{\partial t}\geq 0$ and $\pxi \geq0$, $\forall i$. By Eq.~\ref{eq:PathMethod}, $A^\g(x,x',F)\geq 0$ for monotone $\g$ and non-decreasing $F$, implying any ensemble of monotone path methods would be non-negative. Thus, $A^\o$ is not an ensemble of monotone path methods. For a full proof, see Appendix~\ref{claim1ce}.

\footnotetext{Path methods satisfy completeness because $\sum_i A^{\g}_i (x,x',F) = \int_0^1 \nabla F * d\g  = F(x)- F(x')$ by the fundamental theorem for line integrals.}

Why did this happen? Note that in the context of Theorem~\ref{thm1}, this counterexample is disallowed. $\A^0$ only includes attributions that give non-negative values. Non-monotone path methods can give negative values for functions in $\F^0$, so they are disallowed. However, what is excluded in the game-theoretic context is allowed in the DL context: $\F^D$ functions can increase or decrease from the their baseline, so by completeness, negative and positive attributions must be included. Thus, non-monotone path methods are not prohibited, they are fair game. Without additional constraints, this implies that non-monotone path methods are allowed.

The above example shows that the set of BAMs satisfying axioms 3-5 cannot be characterized as an ensemble of path methods over $\G^m$. Since the counter example was a non-monotone path method, perhaps the set BAMs can be characterized as an ensemble of path methods over $\G$.

\begin{conjecture}\label{conj1}
The following are equivalent:
\begin{itemize}
\item $A\in\A^\text{D}$ satisfies completeness, linearity, sensitivity(b), and implementation invariance.
\item For a fixed $x'$, $A\in\A^\text{D}$ is equivalent to an ensemble of path methods where the maximal path length of the support of $\mu^x$ is bounded.\footnotemark{}
\end{itemize}
\end{conjecture}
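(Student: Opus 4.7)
The plan is to establish the equivalence by proving each direction separately, using Theorem~\ref{thm1} as the key ingredient for the non-trivial direction.

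For the easier direction (second bullet implies first), I would verify each of the four axioms for a single path method $A^\g$ and then extend to ensembles by linearity of integration against $\mu^x$. Completeness for $A^\g$ follows from the fundamental theorem for line integrals, as already noted in the paper; linearity is inherited from linearity of the gradient and the integral; sensitivity(b) is immediate because $\pxi(\g(t)) \equiv 0$ whenever $\partial_i F \equiv 0$; and implementation invariance holds because $A^\g$ depends on $F$ only through its gradient, an invariant of the mathematical mapping. All four axioms pass to ensembles once we may exchange $\int_0^1$ with $\int_{\G(x)} d\mu^x$, and the bounded-path-length hypothesis together with continuity of $\nabla F$ on the compact rectangle $[a,b]$ supplies the Fubini domination needed.

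For the harder direction, the plan is to reduce to Theorem~\ref{thm1} by two linear reductions. First, using completeness, normalize $F$ by a constant so that $F(x')=0$; this is invisible to $A$ because it leaves every partial derivative unchanged. Second, for any $C^1$ function $F$ on $[a,b]$ choose $L > \max_i \|\partial_i F\|_\infty$ and decompose
\[
F(x) = \bigl(F(x) + L\sum_i x_i\bigr) - L\sum_i x_i,
\]
expressing $F$ as a difference of two non-decreasing $C^1$ functions. By linearity, $A(x,x',F)$ is the corresponding difference of attributions, and the second summand is explicit from completeness and sensitivity(b), contributing $L(x_i - x_i')$ in coordinate $i$. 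This reduces the full problem to characterizing $A$ on the non-decreasing subclass.

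The main obstacle will be that Theorem~\ref{thm1} additionally requires attributions to be coordinatewise non-negative on non-decreasing $F$, while our $A$ satisfies no such inequality; this is precisely the source of non-monotone paths in the ensemble. To bridge the gap I would introduce the auxiliary attribution $\tilde{A} := A + c\cdot \text{IG}$ for a constant $c$ chosen large enough that $\tilde{A}$ returns coordinatewise non-negative values on the non-decreasing class (working in each orthant where $x - x'$ has a fixed sign pattern, so that $\text{IG}$ is itself non-negative on non-decreasing $F$). Since $\tilde{A}$ inherits the four axioms from $A$ and $\text{IG}$ by linearity, Theorem~\ref{thm1} would represent $\tilde{A}$ as an ensemble of monotone path methods; undoing the shift then writes $A$ as a signed combination of monotone paths. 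A Hahn--Jordan style argument would convert that signed combination into a genuine probability measure on a broader class of paths obtained by concatenating a forward monotone segment with a reversed monotone segment, which are exactly the piecewise smooth non-monotone paths allowed in $\G(x,x')$. The bounded-path-length requirement in the conjecture is precisely what prevents pathological signed cancellations from yielding limiting paths of unbounded total variation. I expect the hardest step to be making the choice of $c$ uniform across the function class and rigorously identifying the resulting signed measure with a positive measure on non-monotone paths of bounded length; this is the point where Friedman's cost-sharing framework does not translate verbatim, and where genuinely new technical work appears necessary.
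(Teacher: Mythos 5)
First, a point of orientation: Conjecture~\ref{conj1} is stated in the paper as an open problem, not a theorem. The paper offers no proof in either direction and explicitly remarks that it is unclear how Theorem~\ref{thm1} could be brought to bear, since that theorem characterizes only ensembles of \emph{monotone} paths. So your proposal cannot be compared to ``the paper's proof''; if it worked, it would be a new result. Your easy direction (bounded-length ensembles satisfy the four axioms) is essentially fine. The hard direction, however, contains a fatal gap, and it sits exactly where you predicted the difficulty would be.

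The pivotal step is your claim that $\tilde{A} := A + c\cdot\I$ can be made coordinatewise non-negative on the non-decreasing class for $c$ large enough, so that Theorem~\ref{thm1} applies to $\tilde{A}$. No choice of $c$ achieves this, uniform or otherwise. Take the paper's own counterexample from Appendix~\ref{claim1ce}: $F(x_1,x_2)=x_1x_2$ on $[0,1]^2$, $x'=(0,0)$, $x=(1,0)$, and let $A=A^{\g'}$ be the non-monotone path method that circles the square. Here $F$ is non-decreasing and $x-x'$ lies in a fixed orthant, yet $\I(x,x',F)=(0,0)$ (the second coordinate vanishes because $x_2=x_2'$, and the first because $\partial F/\partial x_1\equiv 0$ along the straight segment), while $A^{\g'}(x,x',F)=(1,-1)$. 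Hence $\tilde{A}(x,x',F)=(1,-1)$ for \emph{every} $c$, and $\tilde{A}\notin\A^0$. Since $A^{\g'}$ satisfies all four axioms of the first bullet, any proof of the forward direction must accommodate it, and your bridge to Theorem~\ref{thm1} collapses on precisely this instance. Two secondary problems compound this: $\tilde{A}$ satisfies not completeness but a $(1+c)$-scaled version of it, so Theorem~\ref{thm1} would at best apply to $\tilde{A}/(1+c)$; and the final Hahn--Jordan step---converting a signed combination of monotone path ensembles into a genuine probability measure over concatenated forward-and-reversed paths of bounded length---is asserted rather than argued, and it is not evident that a difference of two monotone ensembles is any kind of positive ensemble at all. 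What your attempt really illustrates is why the paper retreats to adding the NDP axiom in Theorem~\ref{thm2}: non-negativity on the non-decreasing class is the hypothesis Friedman's machinery cannot do without, and it must be assumed rather than manufactured.
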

\footnotetext{For the necessity of bounding maximal path length, see the Appendix~\ref{conj1comm}}
If Conjecture~\ref{conj1} were true, it would somewhat preserve the intention of Claim~\ref{claim1}: that BAMs satisfying axioms 3-5 are path methods. However, it is not clear how Theorem~\ref{thm1} can be used to support Conjecture~\ref{conj1}, since it proves characterizations exclusively with monotone path ensembles. On the other hand, it is an open question whether conjecture~\ref{conj1} is false, that is, perhaps there is a BAM satisfying axioms 3-5 that is not an ensemble of path methods. 

Even if we do not have any path characterization for BAMs satisfying axioms 3-5, we submit an insight into BAMs satisfying axioms 4 and 5.
\begin{lemma}\label{lemma1}
Suppose a BAM $A$ satisfies linearity and sensitivity(b), and $\nabla F$ is defined on $[a,b]$. Then $A(x,x',F)$ is a function solely of $x,x'$, and the gradient of $F$. Furthermore, $A_i(x,x',F)$ is a function solely of $x,x'$ and $\frac{\partial F}{\partial x_i}$ .
\end{lemma}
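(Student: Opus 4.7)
The plan is to establish the stronger second assertion first; the first statement then follows immediately by applying it coordinate-wise. The key observation is that sensitivity(b) combined with linearity forces the attribution $A_i$ to depend on $F$ only through its $i$-th partial derivative, via the standard trick of passing to the difference of two functions with matching partial.

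Fix an index $i$ and suppose $F, G \in \F(a,b)$ satisfy $\partial F / \partial x_i \equiv \partial G / \partial x_i$ pointwise on $[a,b]$. Let $H := F - G$. By linearity of partial differentiation, $\partial H / \partial x_i \equiv 0$ on $[a,b]$, so sensitivity(b) yields $A_i(x, x', H) = 0$ for every $x, x' \in [a,b]$. Invoking the linearity axiom with $\alpha = 1$, $\beta = -1$ gives
\begin{equation*}
A_i(x,x',F) - A_i(x,x',G) \;=\; A_i(x,x', F - G) \;=\; A_i(x,x',H) \;=\; 0,
\end{equation*}
so $A_i(x,x',F) = A_i(x,x',G)$ whenever $F$ and $G$ share their $i$-th partial. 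This proves that $A_i(x,x',F)$ is a function solely of $x$, $x'$, and $\partial F/\partial x_i$. The first assertion then follows immediately: if $\nabla F \equiv \nabla G$ on $[a,b]$, then each component partial coincides, so $A_i(x,x',F) = A_i(x,x',G)$ for every $i$, and hence $A(x,x',F) = A(x,x',G)$.

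I do not anticipate any substantive obstacle; the lemma is essentially a one-line consequence of combining the two axioms. The only implicit hypotheses to verify are that $\F(a,b)$ is closed under the affine combination $F - G$ (which is already built into the statement of the linearity axiom, since otherwise $A_i(x,x',\alpha F + \beta G)$ would be undefined) and that $H$ falls within the scope of sensitivity(b), which is ensured by the assumption that $\nabla F$ is defined on $[a,b]$ for the class of functions considered.
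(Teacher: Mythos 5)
Your argument is the same as the paper's: take $F,G$ with matching $i$-th partials, apply sensitivity(b) to $F-G$ and then linearity to conclude $A_i(x,x',F)=A_i(x,x',G)$, and deduce the gradient statement coordinate-wise. The proof is correct and matches the paper's proof of Lemma~\ref{lemma1} essentially line for line.
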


\subsection{On Other Uniqueness Claims}\label{OtherUniquenessClaims}
There are other attempts to establish the uniqueness of IG or path methods by referencing cost-sharing literature, each of which succumbs to the same issue as Claim~\ref{Claim:FalsePathThm}. The claims make use of an additional axiom, Affine Scale Invariance (ASI).\footnotemark{} We denote the function composition operator by "$\circ$". The ASI axiom, seventh in our list, is as follows:
\begin{enumerate}
    \setcounter{enumi}{6}
    \item\textit{Affine Scale Invariance (ASI)}: For a given index $i$, $c\neq 0$, $d$, define the affine transformation $T(x) := (x_1,...,cx_i+d,...,x_n)$. Then whenever $x, x', T(x), T(x')\in [a,b]$, we have $A(x,x',F) = A(T(x),T(x'),F  \circ  T^{-1})$.
\end{enumerate}

\footnotetext{\citet{xu2020attribution} and \citet{sundararajan2017axiomatic} gives an incorrect definition of ASI, saying $A(x,x',F) = A(T(x),T(x'),F  \circ  T))$. The source definition is from \citet{friedman1999three}.}

\citet[Prop 1]{xu2020attribution} claims that path methods are the unique methods that satisfy dummy, linearity, completeness, and ASI. Here the situation is similar to \citet{sundararajan2017axiomatic}: they import game-theoretic results from \citet{friedman2004paths} which assumes functions are non-decreasing and attributions are non-negative. As mentioned in our discussion of claim~\ref{Claim:FalsePathThm}, the referenced result can not be correctly applied to the context where attributions can be negative and no additional constraints are imposed. For a fuller treatment and counterexample, see Appendix~\ref{OtherClaimsCE}.

In another paper, \citet[Cor 4.4]{sundararajan2020many} claims that IG uniquely satisfies a handful of axioms: linearity, dummy, symmetry, ASI, and  proportionality. This argument is a corollary of another claim: any attribution method satisfying ASI and linearity is the difference of two cost-share solutions \citep[Thm 4.1]{sundararajan2020many}. By breaking up an attribution into two cost-share solutions, the aim is to apply cost-share results. The argument roughly is as follows: for any attribution, input, baseline, and function, they use ASI to formulate the attribution as $A(x,0,F)$, with $x>0$. They write $F=F^+-F^-$, where $F^+$ and $F^-$ are non-decreasing. Then by linearity, $A(x,0,F) = A(x, 0, F^+ -F^-) = A(x,0,F^+) - A(x,0,F^-)$, which, the claim states, is the difference of two cost-share solutions. However, there are methods that satisfy ASI and linearity, but generally give negative values for cost-sharing problems. Thus neither $A(x,0,F^+)$ nor $A(x,0,F^-)$ are necessarily cost-share solutions to cost-share problems. See Appendix~\ref{OtherClaimsCE} for a counterexample.

\section{Establishing Uniqueness Claims with Non-Decreasing Positivity}

We now seek to salvage the uniqueness claims identified in the previous section for a robust set of functions. To this end, we introduce the axiom of \textit{non-decreasing positivity}~(NDP). We say that $F$ is non-decreasing from $x'$ to $x$ if $F (\g(t))$ is non-decreasing for every monotone path $\g(t)\in\G(x,x')$ from $x'$ to $x$. We can then define NDP as follows:

\begin{definition}
A BAM $A$ satisfies NDP if $A(x,x',F)\geq0$ whenever $F$ is non-decreasing from $x'$ to $x$.
\end{definition}

$F$ being non-decreasing from $x'$ to $x$ is analogous to a cost function being non-decreasing in the cost-sharing context. NDP is then analogous to requiring cost-shares to be non-negative. Put another way, NDP states that if $F(y)$ does not decrease when any input $y_i$ moves closer to $x_i$ from $x_i'$, then $A(x,x',F)$ should not give negative values to any input. The addition of NDP enables Theorem~\ref{thm1} to extend closer to the DL context.

\begin{theorem}{(Characterization Theorem with NDP)}\label{thm2}
Let $x'$ be fixed. Define $\mathcal{F}^1$ to be the intersection of $\mathcal{F}^D$ and $C^1$. Define $\A^1$ to be the set of baseline attributions with the domain restricted to $\F^1$. Then the following are equivalent:

\begin{enumerate}
    \item $A\in\A^1$ satisfies completeness, linearity, sensitivity(b), and NDP.
    \item $A\in\A^1$ is an ensemble of monotone path methods.
\end{enumerate}
\end{theorem}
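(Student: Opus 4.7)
The strategy is to reduce Theorem~\ref{thm2} to Friedman's Theorem~\ref{thm1} via an orthant-wise change of coordinates that removes the three obstructions distinguishing $\F^0, \A^0$ from $\F^1, \A^1$ (baseline at zero, functions non-decreasing, attributions non-negative), and then to lift the resulting representation from non-decreasing functions to all of $\F^1$ via linearity and a decomposition $F = F^+ - F^-$.

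The reverse direction ($2 \Rightarrow 1$) is direct verification. If $A(x, F) = \int_{\G^m(x)} A^\g(x, F)\, d\mu^x(\g)$ with each $\mu^x$ supported on monotone paths, then completeness follows from the fundamental theorem of line integrals combined with Fubini, and linearity and sensitivity(b) follow from the linear dependence of $A^\g_i$ on $\pxi$. For NDP, if $F$ is non-decreasing along every monotone path from $x'$ to $x$, then on the hyperrectangle spanned by $x, x'$ one has $\pxi \cdot (x_i - x_i') \geq 0$, and since a monotone $\g$ satisfies $\frac{\partial \g_i}{\partial t}$ with the sign of $x_i - x_i'$, the integrand $\pxi(\g(t))\, \frac{\partial \g_i}{\partial t}(t)$ is almost everywhere non-negative. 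Hence $A^\g_i \geq 0$, and averaging preserves non-negativity.

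For the forward direction ($1 \Rightarrow 2$), fix $x'$ and partition $[a,b]$ into the $2^n$ closed orthants $O_s := \{x \in [a, b] : \text{sgn}(x_i - x_i') = s_i \text{ for all } i\}$, $s \in \{-1, +1\}^n$. For each $s$, let $T_s$ be the affine bijection that sends $x' \mapsto 0$ and flips each coordinate with $s_i = -1$; then $T_s(O_s)$ is a hyperrectangle with the origin as a vertex. Transplant $A$ to this orthant by setting $\tilde F := F \circ T_s^{-1}$ and $\tilde A_i(y, 0, \tilde F) := s_i\, A_i(T_s^{-1}(y), x', F)$. The sign factor is chosen so that completeness, linearity, and sensitivity(b) transfer verbatim to $\tilde A$, since all three concern intrinsic quantities ($F(x) - F(x')$ and partial derivatives) that transform covariantly under $T_s$. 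Because $T_s$ bijects monotone paths from $x'$ in $O_s$ with monotone paths from $0$ in $T_s(O_s)$, the NDP axiom for $A$ becomes exactly Friedman's non-negativity requirement for $\tilde A$ on non-decreasing $C^1$ functions. Theorem~\ref{thm1} then delivers, for each $y \in T_s(O_s)$, a probability measure $\tilde \mu^y$ on $\G^m(y, 0)$ representing $\tilde A$ on non-decreasing $\tilde F$; pulling back by $T_s^{-1}$ defines, for each $x \in O_s$, a measure $\mu^x$ on $\G^m(x, x')$ that represents $A$ on those $F$ which are non-decreasing from $x'$ to $x$.

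To lift this to all $F \in \F^1$, work in the transplanted coordinates: set $\tilde F^-(y) := M \sum_j y_j$ and $\tilde F^+(y) := \tilde F(y) + \tilde F^-(y)$ for $M$ exceeding $\sup_{T_s(O_s), j} |\partial_j \tilde F|$, so that both $\tilde F^\pm$ are non-decreasing on $T_s(O_s)$. Linearity of $\tilde A$ and of each $A^{\tilde \g}$ then transports the ensemble formula from $\tilde F^\pm$ to $\tilde F = \tilde F^+ - \tilde F^-$; pulling back and gluing across $s$ assembles the full family $\{\mu^x\}_{x \in [a,b]}$. The main obstacle I anticipate is verifying that $\F^\text{D}$ is rich enough both (i) to contain a separating subclass of non-decreasing $C^1$ functions sufficient to invoke Theorem~\ref{thm1} in the transplanted setting, and (ii) to be closed under the affine adjustments used in the $\tilde F = \tilde F^+ - \tilde F^-$ decomposition; once these closure properties are pinned down, the coordinate changes, orthant gluing, and measurability of the resulting family are routine.
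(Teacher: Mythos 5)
Your overall architecture matches the paper's: reflect/translate coordinates so the baseline sits at the origin, invoke Friedman's Theorem~\ref{thm1} there, and lift from non-decreasing functions to all of $\F^1$ by adding a steep linear function and using linearity. (Your orthant decomposition is the same as the paper's transform $T_i(y)=(y_i-x_i')(-1)^{\mathbbm{1}_{x_i'>x_i}}$, which depends on $x$ only through the sign pattern of $x-x'$; and your $\tilde F^\pm$ decomposition is the paper's $F^0+c^Ty$ trick.) However, there are two problems.

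First, a genuine gap: to produce an honest element of Friedman's class $\A^0$ you must restrict the attribution to functions defined on the sub-hyperrectangle $[T(x),0]$, and for that restriction to be well defined you need to know that $A(x,x',F)$ depends only on $F$ restricted to $[x,x']$ --- otherwise two extensions of the same $G\in\F^0$ to the full domain could receive different attributions, and the transplanted object is not a single-valued map on $\F^0$ at all. This is not automatic; the paper isolates it as a lemma and proves it \emph{using NDP}: if $G-H\equiv 0$ on $[x,x']$ then both $G-H$ and $H-G$ are (trivially) non-decreasing from $x'$ to $x$, so NDP forces $A(x,x',G-H)\geq 0$ and $A(x,x',H-G)\geq 0$, whence by linearity $A(x,x',G)=A(x,x',H)$. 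Your closing remarks flag the dual concern (whether every non-decreasing $C^1$ function on the orthant extends into $\F^1$) but not this independence-of-extension issue, which is the step where NDP does work beyond merely matching Friedman's sign convention.

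Second, a local but real error: the sign factor in $\tilde A_i(y,0,\tilde F):=s_i\,A_i(T_s^{-1}(y),x',F)$ is wrong. Under the flip, both $\partial\tilde F/\partial y_i=s_i\,(\partial F/\partial x_i)\circ T_s^{-1}$ and the coordinate displacement flip sign, so path-method attribution components are \emph{invariant} and the correct transplant carries no $s_i$. With the factor included, completeness fails ($\sum_i s_iA_i\neq F(x)-F(x')$ whenever some $s_i=-1$) and, worse, non-negativity fails on non-decreasing $\tilde F$ (NDP gives $A_i\geq 0$, so $\tilde A_i=s_iA_i\leq 0$ for $s_i=-1$), so Theorem~\ref{thm1} could not be invoked. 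Dropping the sign factor fixes this; supplying the localization lemma above is the substantive missing ingredient.
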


A sketch of the proof is as follows. Let $x$ be fixed, and $F \in \F^1$. It can be shown that the behavior of $F$ outside of $[x,x']$ is irrelevant to $A(x,x',F)$. Using this, apply a coordinate transform $T$ that maps $[x,x']$ onto $[|x-x'|,0]$, so that $A(x,x',F) = A^0(0,|x-x'|,F^0)$, where $A^0, F^0$ have proper domains to apply Theorem~\ref{thm1}. $F^0$ is $C^1$ and defined on a compact domain, so its derivative is bounded, and there exists $c\in \R^n$ such that $F^0(y) + c^Ty$ is non-decreasing in $y$. Apply Theorem~\ref{thm1} to $A^0(x,x',F^0(y)+c^Ty)$ and simplify to show $A(x,x',F)$ is an ensemble of path methods for function $F^0$ and paths in $\G[|x-x'|,0]$. Reverse the transform to get the ensemble in terms of $F$ and $\G(x,x')$.

To expand Theorem~\ref{thm2} further to non-$C^1$ functions, we consider a class of feed forward neural networks with layers composed of real-analytic functions and the max function. These include connected layers, activation functions like tanh, mish, swish, residual connections,\footnotemark{} as well as ReLU and Leaky ReLU which can be formulated in terms of a max function. Denote the set of neural networks composed of these layers $\F^2$. We then define $\A^2$ to be any $A\in\A^\text{D}$ with domain limited to $F\in\F^1\cup\F^2$. Thus $\A^2$ is a more robust attribution method than $\A^1$ in that it is defined for a broader class of functions.

\footnotetext{Products, sums, and compositions of analytic functions are analytic. Quotients of analytic functions where the denominator is non-zero are analytic.}

We begin with a lemma regarding the topology of the domain of $F\in\F^2$.

\begin{lemma}\label{lemma2}
Suppose $F\in\F^2$. Then $[x,x']$ can be partitioned into a nonempty region $D$ and it's boundary $\partial D$, where $F$ is real-analytic on $D$, $D$ is open with respect to the topology of the dimension of $[x,x']$, and $\partial D$ is measure $0$.
\end{lemma}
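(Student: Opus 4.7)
The plan is to proceed by induction on the layered structure of $F \in \F^2$, maintaining at each step a partition of $[x,x']$ into finitely many open pieces (in the relative topology of $[x,x']$) on which the intermediate outputs of the network are real-analytic, together with a complementary exceptional set of measure zero that accumulates as we encounter max operations.

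For the base case, the identity map on $[x,x']$ is trivially real-analytic, so the initial partition consists of the single open region equal to the relative interior of $[x,x']$ (its relative boundary is a finite union of lower-dimensional faces and hence has measure zero in the ambient dimension). For the inductive step, suppose after processing $k$ layers we have finitely many open regions $R_1,\ldots,R_m$ whose union has full measure in $[x,x']$ and on each of which the layer-$k$ output is a real-analytic function of position in $[x,x']$. If layer $k+1$ is one of the real-analytic building blocks (affine, $\tanh$, mish, swish, residual connection, quotients with nonvanishing denominator, etc.), composition preserves analyticity on each $R_j$ and the partition is unchanged. If layer $k+1$ contains a max operation $\text{max}(u,v)$, then on each connected component of $R_j$ both $u$ and $v$ are real-analytic, hence so is $u-v$. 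Either $u - v \equiv 0$ on the component, in which case $\text{max}(u,v) = u$ remains analytic and no split is needed; or $u - v$ is not identically zero, in which case its zero set has Lebesgue measure zero by the standard identity-theorem consequence for real-analytic functions on connected open sets. In the nontrivial case we split the component into the open subsets $\{u>v\}$ and $\{u<v\}$ (on which $\text{max}$ equals $u$ or $v$ respectively and thus stays analytic), absorbing the zero set $\{u=v\}$ into the exceptional set. Multi-argument max is handled by iterating pairwise max, and $\L$ and Leaky $\L$ are instances of this case with affine arguments.

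Because $F$ has finitely many layers and each contains finitely many max operations, the induction terminates after finitely many steps, yielding a finite partition $R_1,\ldots,R_M$ on which $F$ is real-analytic, together with an exceptional set that is a finite union of zero sets of not-identically-zero real-analytic functions on connected open regions, hence of Lebesgue measure zero in $[x,x']$. Setting $D := \bigcup_i R_i$ and $\partial D := [x,x'] \setminus D$, we obtain $D$ open in the relative topology, $F$ real-analytic at every point of $D$ (each such point lies in some $R_i$ on which $F$ has a uniform analytic formula), and $\partial D$ of measure zero; nonemptiness of $D$ follows from $[x,x']$ having positive Lebesgue measure in its own dimension while $\partial D$ does not.

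The main obstacle is the careful handling of the case where $x$ and $x'$ agree in some coordinates, so that $[x,x']$ is a proper sub-hyperrectangle of dimension $d < n$: one must restrict all intermediate analytic functions to the affine hull of $[x,x']$, verify that the restriction remains real-analytic (standard, since restriction to an affine subspace preserves analyticity), and invoke the zero-set measure result in the reduced $d$-dimensional Lebesgue measure. A secondary delicate point is verifying that the $u - v \equiv 0$ dichotomy is genuinely exhaustive at each connected component and that the two open split pieces inherit continuity (immediate from analyticity) so the induction hypothesis propagates cleanly through a $\text{max}$-splitting step.
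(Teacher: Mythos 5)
Your proposal is correct and follows essentially the same route as the paper's proof: an induction over the layers in which real-analytic layers preserve analyticity and each $\max$ is reduced to the two-input case, with the exceptional set controlled by the fact that a real-analytic function on a connected open set is either identically zero or has a null zero set. The only differences are bookkeeping (you argue per connected component with a two-case dichotomy, while the paper classifies points into three cases and uses a countable cover by balls to get the same measure-zero conclusion), and your claim of ``finitely many'' open pieces should really be ``countably many'' after a $\max$-split, which is harmless since the exceptional set remains a countable union of null sets.
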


We now present a claim extending Theorem~\ref{thm2} to non-$C^1$ functions. Let $D$ denote the set as described above, and denote the set of points on the path $\g$ by $P^\g$. 

\begin{theorem}{(Extension to class of non-$C^1$ functions)}\label{thm3}
Let $x'$ be fixed. Suppose $A\in\A^2$ satisfies completeness, linearity, sensitivity(b), and NDP, and that $F\in\F^2$. For some $x\in[a,b]$, let $\mu^x$ be the measure on $\G^m(x.x')$ from Theorem~\ref{thm2}. If $A(x,F)$ is defined, and for almost every path $\g\in\G^m(x,x')$ (according to $\mu^x$), $\partial D\cap P^\g$ is a null set, then $A(x,F)$ is equivalent to the usual ensemble of path methods.
\end{theorem}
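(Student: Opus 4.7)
The plan is to reduce the case $F \in \F^2$ to the $C^1$ case already handled by Theorem~\ref{thm2} via a smooth approximation argument, exploiting the fact that the measure $\mu^x$ depends on $A$ and $x$ but not on the attributed function.

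First, I would verify that the candidate right-hand side $\int_{\G^m(x,x')} A^\g(x,F)\,d\mu^x(\g)$ is well-defined under the hypotheses. By Lemma~\ref{lemma2}, $\nabla F$ exists and is continuous on the open set $D$, and is bounded on $[x,x']$ because $F$ is built from compositions of real-analytic maps and $\max$ layers on a compact domain. The hypothesis that $\partial D \cap P^\g$ is a null subset of the path for $\mu^x$-almost every $\g$ makes $\nabla F(\g(t)) \cdot \g'(t)$ defined for Lebesgue-almost every $t \in [0,1]$, so each $A^\g(x,F)$ is a well-defined Lebesgue integral and the outer integral against $\mu^x$ is a well-defined double integral by standard measurability considerations.

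Second, I would construct a sequence $\{F_k\}_{k \geq 1} \subset \F^1$ approximating $F$ by replacing every $\max(u,v)$ layer in the network defining $F$ by the smooth log-sum-exp relaxation $k^{-1}\log(e^{ku}+e^{kv})$. Each $F_k$ is a $C^\infty$ DL model, hence in $\F^1$, and $F_k \to F$ uniformly on $[a,b]$ with $\nabla F_k(y) \to \nabla F(y)$ for every $y \in D$ together with a uniform gradient envelope in $k$. Applying Theorem~\ref{thm2} to each $F_k$ with the common measure $\mu^x$ yields
\begin{equation*}
    A(x,F_k) \;=\; \int_{\G^m(x,x')} A^\g(x,F_k)\, d\mu^x(\g),
\end{equation*}
and I would then pass to the limit $k \to \infty$ on both sides. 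On the right-hand side, dominated convergence first in $t$ (using the path hypothesis to avoid $\partial D$) gives $A^\g(x,F_k) \to A^\g(x,F)$ for $\mu^x$-almost every $\g$, and a second dominated convergence in $\g$ gives convergence of the ensemble integrals to $\int_{\G^m(x,x')} A^\g(x,F)\,d\mu^x(\g)$.

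The main obstacle will be showing that $A(x,F_k) \to A(x,F)$ on the left-hand side, since the axioms provide no direct continuity of $A$ in $F$. My plan is to attack this via linearity together with Lemma~\ref{lemma1}: the residual $A(x, F - F_k)$ is well-defined, and by Lemma~\ref{lemma1} each component depends only on $\partial_i(F - F_k)$, which is uniformly bounded and converges pointwise to zero on $D$. Combining sensitivity(b) with an exhaustion of $D$ by compact subsets $K_\ell$, on which $F - F_k$ restricts to a locally constant function plus a controllable $C^1$ perturbation (with the $F^1$-perturbation vanishing as $k\to\infty$ and the locally-constant piece killed by sensitivity(b)), should drive $A(x, F - F_k) \to 0$ upon letting $\ell \to \infty$ and then $k \to \infty$. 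Once this left-hand-side limit is identified, equating it with the right-hand-side limit yields the claimed ensemble representation for $A(x,F)$.
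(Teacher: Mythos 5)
Your reduction to Theorem~\ref{thm2} via smooth approximation is the right general strategy, and you correctly identify the crux: nothing in the axioms gives continuity of $A$ in $F$, so the left-hand-side limit $A(x,F_k)\to A(x,F)$ is the whole battle. But your proposed resolution of that step does not work. Lemma~\ref{lemma1} tells you $A_i(x,x',F-F_k)$ depends only on $\partial_i(F-F_k)$; it does not tell you that this quantity is small when $\partial_i(F-F_k)$ is pointwise small on $D$ --- that would again be a continuity assumption you do not have. Sensitivity(b) only annihilates functions whose $i$-th partial vanishes \emph{identically} on $[a,b]$, so the ``locally constant piece on a compact exhaustion of $D$'' cannot be killed by it: restricting to compacta does not produce a function in the domain of $A$ with identically zero partial, and the residual near $\partial D$ is exactly where $\partial_i(F-F_k)$ fails to be small. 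Notably, your limit argument never actually uses NDP, which is the one axiom that makes the theorem true at all.

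The paper closes this gap by converting NDP into an order-preservation property (its Lemma~\ref{lemma4}): if $G$ is Lipschitz and non-decreasing from $x'$ to $x$ in its $i$-th component, then $A_i(x,x',G)\geq 0$; hence $A_i$ is monotone with respect to the partial order ``$F-F_m$ non-decreasing in component $i$.'' It then builds, for each fixed $i$, a sequence of $C^1$ \emph{under}-approximations $f_m\nearrow \partial F/\partial x_i$ (extending $\partial F/\partial x_i$ by $-L$ on $\partial D$ to get a lower semi-continuous function, then applying Baire's theorem and Stone--Weierstrass), sets $F_m(y)=\int_0^{y_i}f_m(y_{-i},t)\,dt$, and deduces $A_i(x,F)\geq A_i(x,F_m)$; Theorem~\ref{thm2} plus dominated convergence gives one inequality, and an over-approximating sequence gives the other. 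Your log-sum-exp smoothing does not fit this scheme: $F_k-F$ is small in sup norm but $\partial_i(F_k-F)$ changes sign near the kink set, so the difference is not non-decreasing in any component and NDP gives you no one-sided bound. If you want to salvage your approach, you must replace the uniform approximation by one-sided approximations of the single partial derivative $\partial_i F$, which is essentially the paper's construction.
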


With the addition of NDP, we also establish the other uniqueness claims of Section~\ref{OtherUniquenessClaims}. For details, see Appendix~\ref{OtherClaimsCE}.

\subsection{Lipschitz Continuity}
DL models can be extremely sensitive to slight changes in the input image~\cite{goodfellow2014explaining}. %
It stands to reason that IG should also have increased sensitive in the output for more sensitive models, and less sensitivity in the output for less sensitive models. The question of whether IG is locally Lipshchitz, and what its local Lipschitz constant is, has been studied previously by experimental means. Previous works searched for the Lipschitz constant when the domain is restricted to some ball around the input, either by Monte Carlo sampling~\citep{yeh2019fidelity} or exhaustive search of nearby input data~\citep{alvarez2018robustness}. In contrast to these, we provide theoretical results on the global sensitivity of IG for two extremes: a model with a discontinuous gradient (as with a neural network with a max or ReLU function), and a model with a well behaved gradient:

\begin{theorem}\label{thm4}
Let $F$ be defined on $[a,b]$, $x'$ be fixed. If $F$ has the usual discontinuities due to ReLU or Max functions, then $\I(x,F)$ may fail to be Lipschitz continuous in $x$. If $\nabla F$ is Lipschitz continuous with constant $L$ and $|\pxi|$ attains maximum $M$, then $\I_i(x,F)$ is Lipschitz continuous in $x$ with Lipschitz constant at most $M + \frac{|a_i-b_i|}{2}L$. 
\end{theorem}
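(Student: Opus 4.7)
The theorem has two essentially independent parts. For the negative direction the plan is to exhibit a small explicit counterexample where a $\max$-type kink in $F$ produces a jump discontinuity in the IG map, which of course destroys Lipschitz continuity. For the upper bound, the plan is to factor $\I_i(x,F)$ as the product $(x_i-x_i')\,g_i(x)$ with $g_i$ an averaged partial derivative, and then apply the triangle inequality in a way that handles the two factors separately, so that the pointwise bound $M$ and the gradient Lipschitz constant $L$ each appear in their natural places.

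\textbf{Counterexample for the ReLU/max case.} Take $n=2$, $[a,b]=[0,1]^2$, baseline $x'=(0,0)$, and $F(x_1,x_2)=\max(x_1,x_2)$. Off the diagonal $\partial F/\partial x_1$ equals $1$ on $\{x_1>x_2\}$ and $0$ on $\{x_1<x_2\}$. The straight-line segment from $(0,0)$ to $(a,b)$ satisfies $ta>tb$ for every $t\in(0,1]$ when $a>b$ and $ta<tb$ when $a<b$, so
\begin{equation}
\I_1\bigl((a,b),F\bigr)=a\cdot\mathbf{1}[a>b]
\end{equation}
(with any convention on the measure-zero diagonal). Letting $(a,b)$ traverse from $(1,1-\epsilon)$ to $(1,1+\epsilon)$ moves the input by $2\epsilon$ while $\I_1$ jumps from $1$ to $0$, producing an unbounded difference quotient as $\epsilon\to 0$. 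So $\I$ can already fail to be continuous, let alone Lipschitz, in the presence of the usual kinks.

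\textbf{Upper bound under smooth gradient.} Define
\begin{equation}
g_i(x):=\int_0^1\frac{\partial F}{\partial x_i}\bigl(x'+t(x-x')\bigr)\,dt,
\end{equation}
so that $\I_i(x,F)=(x_i-x_i')\,g_i(x)$. The key algebraic step is the telescoping decomposition
\begin{equation}
\I_i(x,F)-\I_i(y,F)=(x_i-y_i)\,g_i(x)+(y_i-x_i')\bigl(g_i(x)-g_i(y)\bigr).
\end{equation}
The first summand is bounded by $M\,|x_i-y_i|\leq M\|x-y\|$ using $|g_i(x)|\leq M$. For the second, Lipschitz continuity of $\nabla F$ yields
\begin{equation}
|g_i(x)-g_i(y)|\leq\int_0^1 L\,\|t(x-y)\|\,dt=\frac{L}{2}\|x-y\|,
\end{equation}
while $|y_i-x_i'|\leq|b_i-a_i|$ since $y_i,x_i'\in[a_i,b_i]$. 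Adding the two contributions gives the claimed Lipschitz constant $M+\tfrac{1}{2}|a_i-b_i|L$.

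\textbf{Main obstacle.} The calculations themselves are essentially a product-rule bound, so the only nonroutine point is the choice of the additive splitting above. If one tries to pull everything inside a single integral (e.g.\ $(x_i-x_i')\bigl[g_i(x)-g_i(y)\bigr]+(x_i-y_i)g_i(y)$ with a sloppier bound, or estimating $|\partial_iF(\gamma_x(t))-\partial_iF(\gamma_y(t))|$ in one shot) the clean separation between the roles of $M$ and $L$ is lost and one typically ends up with a constant that depends on $|b_i-a_i|$ multiplying both $M$ and $L$. The telescoping through $(y_i-x_i')g_i(x)$ is the one move that keeps the $M$ term free of the diameter factor. For the counterexample half, the conceptual content is just that the IG path is a ray through the origin, so it sits on a single side of any linear kink surface through the origin, transporting the discontinuity of $\partial_i F$ directly into a discontinuity of $\I_i$.
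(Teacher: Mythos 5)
Your proof is correct and follows essentially the same route as the paper's: a $\max$-of-linear-functions counterexample whose kink passes through the origin so the straight-line IG path transports the gradient jump into a jump of $\I_1$, and for the upper bound the identical telescoping decomposition $(x_i-y_i)g_i(x)+(y_i-x_i')(g_i(x)-g_i(y))$ with $|g_i|\leq M$ and $|g_i(x)-g_i(y)|\leq \tfrac{L}{2}\|x-y\|$. The only nitpick is that your chosen points $(1,1\pm\epsilon)$ fall outside your stated domain $[0,1]^2$; take a slightly larger box or compare $(1,1-\epsilon)$ with $(1-\epsilon,1)$ instead.
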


\section{Distribution Baseline Axioms}
As mentioned in \ref{sectionMod&Extend}, some extensions of IG use a distribution of baselines. Here we give a formal definition of the distributional IG, and comment on some axioms it satisfies. We denote the set of distributions on the input space by $\mathcal{D}$. The set of distributional attributions, $\mathcal{E}$, is then defined as the set containing all functions of the form $E:[a,b]\times \mathcal{D}\times \F \rightarrow \R^n$. Given a distribution of baselines images $D\in\mathcal{D}$, we suppose the baseline random variable $X'\sim D$. Then the \textit{distributional IG} is given by $\EG(x,X',F) := \E_{X'\sim D}\I(x,X',F)$. Particular axioms, namely implantation invariance, sensitivity(b), linearity, and ASI can be directly carried over to the baseline attribution context. Distributional IG satisfies these axioms. The axioms of sensitivity(a), completeness, symmetry preserving, and NDP do not have direct analogues. Below we identify distributional attribution axioms that extend sensitivity(a), completeness, and symmetry preserving axioms to the distributional attribution case. Distributional IG satisfies these axioms as well.\footnotemark{} See Appendix~\ref{app:distributionalAxioms} for details.

\footnotetext{Completeness has been observed by \citet{erion2021improving}.}

Let $E\in\mathcal{E}$, $D\in\mathcal{D}$, $X'\sim D$, and $F,G\in\F$:
\begin{enumerate}
    \item Sensitivity(a): Suppose $X'$ varies in exactly one input, $X_i'$, so that $X_j' = x_j$ for all $j\neq i$, and $\E F(X')\neq F(x)$. Then $E_i(x,X',F) \neq 0$.
    \item Completeness: $\sum_{i=1}^{n} E_i (x,X',F) = F(x)-\E F(X')$.
    \item Symmetry Preserving: For a given $i$, $j$, define $x^*$ by swapping the values of $x_i$ and $x_j$. Now suppose that for all $x$, $F(x) = F(x^*)$. Then whenever $X'_i$ and $X'_j$ are exchangeable\footnotemark{}, and $x_i = x_j$, we have $E_i (x,X',F) = E_j (x,X',F)$.
    \item NDP: If $F$ is non-decreasing from every point on the support of $D$ to $x$, then $E(x,X',F)\geq 0$.
\end{enumerate}
\footnotetext{$X_i$ and $X_j$ are exchangeable if $X$ and $X^*$ are identically distributed.}

\section{Internal Neuron Attributions}\label{sec:InternalNeuronAttributions}
\subsection{Previous Methods}\label{sec:InternalNeuronIntro}
Previous works apply IG to internal neuron layers to obtain internal neuron attributions. We review their results before discussing extensions. Suppose $F$ is a single output of a feed forward neural network, with $F:[a,b]\rightarrow \R$. We can separate $F$ at an internal layer such that $F(x) = G(H(x))$. Here $H:[a,b] \rightarrow \R^m$ is the first half of the network outputting the value of an internal layer of neurons, and $G:\R^m \rightarrow \R$ is the second half of the network that would take the internal neuron values as an input. We assume the straight line path $\g$, although other paths can be used. Following~\citet{dhamdhere2018important}, the flow of the gradient in $\I_i$ through neuron $j$, labeled $\I_{i,j}$, is given by:

\small
\begin{equation}\label{IG_ij}
    \I_{i,j} = (x_i-x_i') \int_0^1 \frac{\partial G}{\partial H_j}(H(\g)) \frac{\partial H_j}{\partial x_i} (\g) dt
\end{equation}
\normalsize

By fixing the input and summing the gradient flow through each internal neuron, we get $\I_i$, or, what we equivalently denote for this context, $\I_{i,*}$. This is what we should expect, and is accomplished by moving the sum into the integral and invoking the chain rule.

\small

\begin{equation}\label{IG_i}
    \sum_j\I_{i,j} = (x_i-x_i') \int_0^1 \frac{G \circ H}{d x_i}(\g) dt = \I_{i,*}
\end{equation}
\normalsize

If we fix an internal neuron and calculate the total gradient flow through it for each input, we get an internal neuron attribution, or what \citet{dhamdhere2018important} calls a neuron's~\textit{conductance}:

\small

\begin{align}
    \I_{*,j} &= \sum_i\I_{i,j} \nonumber\\
    &= \sum_i (x_i-x_i') \int_0^1 \frac{\partial G}{\partial H_j}(H(\g)) \frac{\partial H_j}{\partial x_i} (\g) dt \nonumber\\
    &= \int_0^1 \frac{\partial G}{\partial H_j}(H(\g)) \sum_i [ \frac{\partial H_j}{\partial x_i}(\g) \times (x_i-x_i') ] dt \nonumber\\
    &= \int_0^1 \frac{\partial G}{\partial H_j}(H(\g)) \frac{d (H_j\circ\g)}{d t} dt\label{IG_j}
\end{align}
\normalsize

\citet{shrikumar2018computationally} recognized the last line above, which, since $H_j(\g)$ is a path, formulates conductance as a path method. Note that this path may not be monotone, implying the usefulness of non-monotone path methods.

The above formulations can be extended to calculating the gradient flow through a group of neurons in a layer, or through a sequence of neurons in multiple layers. But here we run into a computational issue. Calculating eqs. \ref{IG_ij} or \ref{IG_j} for each neuron in a layer could be expensive using standard programs. We hypothesize this is because they are designed primarily for efficient back-propagation, which finds the gradient of multiple inputs with respect to a single output, not the Jacobean for a large number of outputs.

\subsection{Neuron Attributions for an Input Patch}\label{sec:InternalNeuronPatchInput}
An IG attribution map usually highlights regions or features that contributed to a model's output, e.g., highlighting a face in a picture of a person. 
A pertinent question is: are there internal neurons that are responsible for attributing that feature? In our example, are there neurons causing IG to highlight the face? We propose an answer by attributing to a layer of internal neurons for an input patch.

If we index each input feature, then we can denote a patch of input features by $S$. Then the gradient flow through a neuron $j$ for the patch $S$ is given by:

\small

\begin{equation}\label{IG_S}
\begin{split}
    \I_{S,j} &= \sum_{i\in S} \I_{i,j}\\
    &= \int_0^1 \frac{\partial G}{\partial H_j}(H(\g)) \sum_{i\in S} \frac{\partial H_j}{\partial x_i} (\g) (x_i-x_i') dt
\end{split}
\end{equation}
\normalsize

As noted in section~\ref{sec:InternalNeuronIntro}, computing Eq.~\ref{IG_S} for a full layer of neurons can be expensive. We introduce a speedup inspired by \citet{shrikumar2018computationally}. Let $d$ be a vector with $d_i = x_i-x_i'$ if $i\in S$, and $d_i = 0$ if $i\notin S$. Denote the unit vector $\frac{d}{||d||}$ by $\hat{d}$. Formulating a directional derivative, then taking Reimann sum with $N$ terms, we write:

\small

\begin{equation} \nonumber%
\begin{split}
    \I_{S,j} &= \int_0^1 \frac{\partial G}{\partial H_j}(H(\g)) \sum_{i\in S} \frac{\partial H_j}{\partial x_i} (\g) (x_i-x_i') dt\\
    &= \int_0^1 \frac{\partial G}{\partial H_j} (H(\g)) \hspace{1mm}D_{\hat{d}}H_j (\g)
    \hspace{1mm} ||d|| dt\\
    &\approx ||d|| \int_0^1 \frac{\partial G}{\partial H_j}(H(\g)) \frac{H_j(\g(t)+\frac{\hat{d}}{N})-H_j(\g(t))}{1/N} dt\\
    &\approx ||d|| \sum_{k=1}^N  \frac{\partial G}{\partial H_j}(H(\g(\frac{k}{N}))) \times [H_j(\g(\frac{k}{N})+\frac{\hat{d}}{N})-H_j(\g(\frac{k}{N}))]
\end{split}
\end{equation}
\normalsize

With this speedup, we bypass computing the Jacobean to find $\frac{\partial H_j}{\partial x_i}$ for each input and internal neuron. For an accurate calculation, choose $N$ such that $\I_{S,j}+\I_{S^c,j} \approx \I_{*,j}$.

\section{Experimental Results}
\label{sec:exp}

Here, we present experiments validating the methods in Section~\ref{sec:InternalNeuronAttributions}. We experiment on two models/data sets: ResNet-152~\cite{resnet} trained on ImageNet~\cite{imagenet}, and a custom model trained on Fashion MNIST~\cite{fashionmnist}. Some results for ImageNet appear here, while further results appear in the appendix. The general outline of each experiment is: 1) calculate a performance metric for neurons in an internal layer using IG attributions, 2) rank neurons based on the performance metric, and 3) prune neurons according to rank and observe corresponding changes in model. The goal is to validate the claim that the methods of Section~\ref{sec:InternalNeuronAttributions} identify neurons that contribute to a particular task. The code used in our experiments is available at: \normalsize {\small \url{https://github.com/optimization-for-data-driven-science/XAI}}

\normalsize

\subsection{Preliminaries: Pruning Based on Whole Input Internal Neuron Attributions}

The first experiment (Figure~\ref{fig:ResNetExp1}) calculates a general performance metric for each each internal neuron in a particular layer. We calculate the average neuron conductance over each input in the training set, where the output of $F$ is the confidence in the correct label. We use a black image as a baseline. Following the method of ``deletion and insertion"~\citep{petsiuk2018rise}, we progressively prune (zero out) a portion of the neurons according to their rank and observe changes in model accuracy on the teseting set.\footnotemark{} We zero-out internal neurons because we wish to mask the indication of feature presence, and ResNet uses the ReLU activation function, which encodes no feature presence with a neuron value of zero. These experiments are preformed twice: once on a dense layer ($2^{nd}$ to last), and once on a convolutional layer (the output of the conv2x block).

\begin{figure}[ht]
    \centering
    \includegraphics[width=0.5\linewidth]{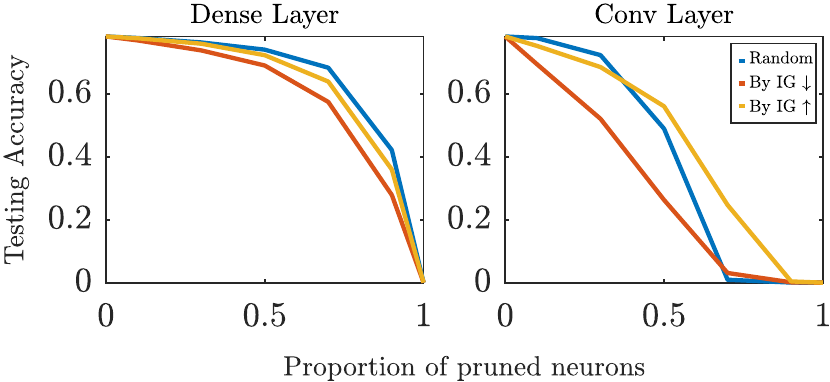}
    \caption{Pruning neurons by conductance values versus random pruning. IG $\downarrow$ means pruning neurons by conductance values in descending order. IG $\uparrow$ means pruning neurons by conductance values in ascending order. }
    \label{fig:ResNetExp1}
\end{figure}

\footnotetext{While similar, our experiment differs from others by zeroing the filter, not ablating it \citep{dhamdhere2018important} or fixing it to a reference input \citep{shrikumar2018computationally}}

When pruning the dense layer, we see that the order of pruning makes little difference in performance. %
We attribute this effect to the dense layer having an evenly distributed neuron importance, something likely in a 1000 category classifier. In the convolutional layer, we see that pruning by descending order rapidly kills the model's accuracy, while pruning by ascending order generally maintains model accuracy better than random pruning. This shows that average conductance can help identify neuron importance.

The second experiment (Figure~\ref{fig:ResNetLemon}) calculates a performance metric indicating a neuron's contribution in identifying a particular image category. We follow~\citet{lundstrom2022explainability} and calculate the same performance metric as previously, but average over a particular category of images (e.g. \textit{Lemons}). We then rank and prune neurons, observing changes in the model's test accuracy identifying the particular category.

In both layers, pruning to kill performance quickly reduces the model's accuracy identifying \textit{Lemon}. This is compared to the median category's performance, and the random pruning baseline. When we prune to keep performance in the dense layer, we see that \textit{Lemon} performs well below the median with random pruning, but swaps to above the median with IG pruning. Pruning in the convoluional layer quickly causes \textit{Lemon} to become very accurate while the median accuracy dips below the random baseline.

\begin{figure}[ht]
    \begin{minipage}{1\linewidth}
        \centering
        \includegraphics[width=0.5\linewidth]{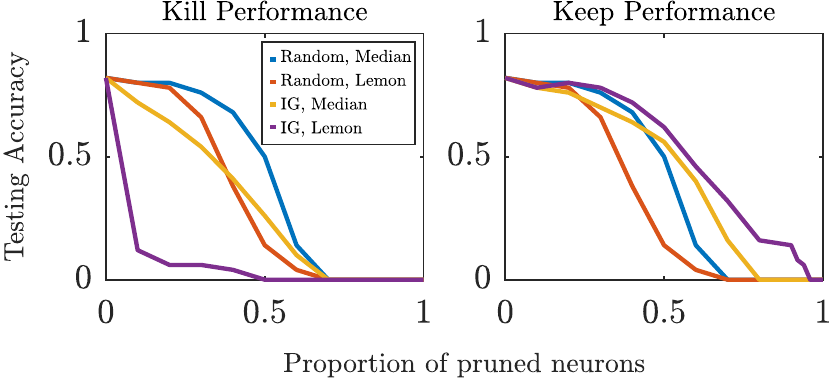}
        \smallskip
    \end{minipage}
    \begin{minipage}{1\linewidth}
        \centering
        \includegraphics[width=0.5\linewidth]{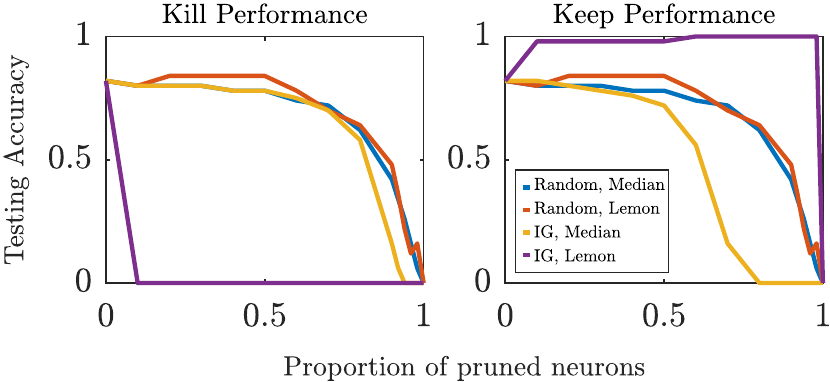}
    \end{minipage}
    \caption{Testing accuracy when neurons are pruned according to their IG values corresponding to the class \textit{Lemon}. Top: neurons pruned in dense layer. Bottom: neurons pruned in a convolutional layer. Left: Neurons pruned by IG values in descending order. Right: Neurons pruned by IG values in ascending order. ``Random, Median'', ``IG, Median'' report median accuracy of all classes for random/ranked pruning. ``Random, Lemon'', ``IG, Lemon'' report accuracy of class \textit{Lemon} for random/ranked pruning.}
    \label{fig:ResNetLemon}
\end{figure}

\subsection{Pruning Based on Internal Neuron Attributions for Image Patchs}
\label{sec:bd-box}

Here we show results of an experiment using image-patch based internal neuron attributions. In a picture of two traffic lights (Figure~\ref{fig:TrafficLightImages}, top-left), we identify an image-patch around one traffic light as a region of interest. We then find the attributions of each internal neuron in a convolutional layer for this image patch and rank them. Using this ranking, we progressively prune the neurons (top-ranked first), periodically reassessing the total IG attributions inside and outside the specified region. This procedure is repeated, instead ranking neurons by their conductance for the image.

\begin{figure}[h]\label{trafficLightExample}
    \begin{center}
        \begin{minipage}{0.2\linewidth}
            \centering
            \includegraphics[width=\linewidth]{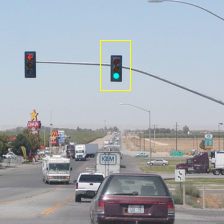}
        \end{minipage}
        \hspace{0.1in}
        \begin{minipage}{0.2\linewidth}
            \centering
            \includegraphics[width=\linewidth]{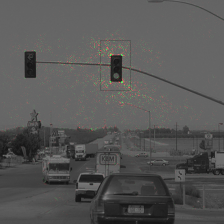}
        \end{minipage}
        \hspace{0.1in}
        \begin{minipage}{0.2\linewidth}
            \centering
            \includegraphics[width=\linewidth]{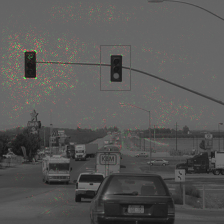}
        \end{minipage}
        \hspace{0.1in}
        \begin{minipage}{0.2\linewidth}
           \centering
           \includegraphics[width=\linewidth]{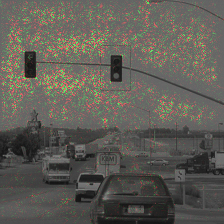}
        \end{minipage}
    \end{center}
    \caption{From left to right are images A,B,C,and D. A: The original image and bounding box indicating specified image patch. B: IG attributes visualized. Green dots show postive IG, red dots show negative IG. We see most IG attributes are within or around the bounding box. C: IG attributes visualized after top $1\%$ of neurons pruned based on image-patch attributions. We see IG attributes moved from the right light to the left light. D: IG attributes visualized after top $1\%$ neurons pruned based on the global ranking. We see IG attributes are scattered.}
    \label{fig:TrafficLightImages}
\end{figure}

From Figure~\ref{fig:TrafficLightExperiment}, we see that using global conductance rankings causes the sum of IG inside and outside the bounding box to briefly fluctuate, then converge to zero. In comparison, pruning by region-targeted rankings consistently causes a positive IG sum outside the box and negative IG sum inside the box. This reinforces the claim that image-patch based rankings give high ranks to neurons causing positive IG values in the bounding box. Interestingly, we also see that ($\sum \I$, all) quickly drops for the global pruning but stays elevated for the regional pruning. By completeness, this indicates the model quickly looses confidence in the former case, but keeps a high confidence for up to 50\% pruning when pruned using region-targeted rankings.

In Figure~\ref{fig:TrafficLightImages}, we prune the top-1\% of neurons in a convolutional layer according to both conductance and image-patch rankings, then re-visualize the IG. The model gives an initial confidence score of 0.9809. When pruning according to conductance, the confidence changes to 0.9391, but the model's attention loses focus, and a broad region receives a cloudy mixture of positive and negative attributions. When pruning according to the image-patch rankings, the confidence score is 0.9958, but the model's attention shifts from the right traffic light to the left one. This validates that the image-patch method indeed highly ranked internal neurons associated with the right traffic light, and ranked neurons is a region-targeted way compared to general neuron conductance. Further experiments can be found in Appendix~\ref{AppendixExperiments}.

\begin{figure}[H]\label{trafficLightNumerical}
    \centering
    \includegraphics[width=0.5\linewidth]{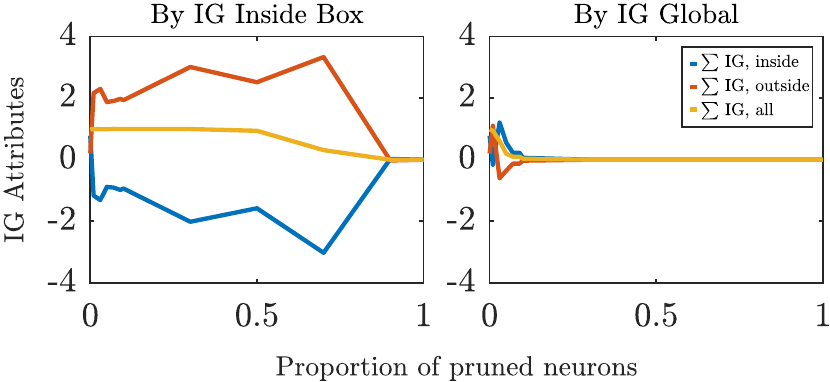}
    \caption{Sum of IG attributes inside and outside the bounding box when neurons are pruned according to certain rankings. Left: Neurons are pruned based on IG global ranking. Right: Neurons are pruned based on the IG ranking inside the bounding box. }
    \label{fig:TrafficLightExperiment}
\end{figure}

\section{Summary}
In this paper, we touched on several aspects of IG. We showed that three uniqueness claims of IG  were false due to discrepancies between the cost-sharing and deep-learning function spaces. With the addition of NDP, these results can be salvaged. We showed that depending on the behaviour of the model, IG may or may not be Lipschitz in the input image. We presented distribution-baseline analogues of certain axioms from \citet{sundararajan2017axiomatic}, all of which the distributional IG satisfy. We reviewed extensions of IG to internal neurons and introduced an efficient means of calculating internal neuron attributions for an image patch. Finally, we presented experiments validating internal neuron attributions using IG.

\bibliography{ref_XAI}
\bibliographystyle{unsrtnat}

\clearpage
\onecolumn

\appendix
\section*{Appendix}

\section{Explanation of Axioms}\label{ExplanationOfAxioms}
Here we give an explanation of each axiom. 1)~Sensitivity(a) stipulates that if altering a baseline by a single input yields a different output, then that input should have some attribution score. 2)~Implementation Invariance states that an attribution method should depend on the form of the function alone, not by any particular way it is coded up. In our formulation, this is a given, but it is possible to consider attribution methods that are a function of model implementation. 3) Completeness states that the sum of the attributions equals the change in function value. It allows an interpretation of each attribution as a contribution to a portion of the function value change. It does this by ensuring that the attribution function has a complete accounting of said change. 4)~Linearity would be desirable in ensemble voting models, and indicates that the attribution to an input is the weighted sum of its attributions for the individual models, with the weights equal to the ensemble weights. 5)~Sensitivity(b) is called dummy or dummy-consistency in \citet{friedman2004paths}, and simply means that if an input does not affect the output, then it should have zero attribution. 6)~Symmetry Preservation indicates that if two variables are universally interchangeable in the function, and their values are identical in the input and baseline, then their attributions should not differ. 7) Affine Scale Invariance implies that if the baseline and input are shifted are stretched, but the model was adjusted for this shifting and stretching, then the attributions would not change. As an example, if a model were designed for Fahrenheit, and then adjusted to Celcius, the attributions would not change for the same absolute-temperature input and baseline. For further comments, see~\citet{sundararajan2017axiomatic}, \citet{sundararajan2020many}.

\section{Figure 1 from ~\citet{sundararajan2017axiomatic}}\label{AAfig1}
\begin{figure}[H]
\centering
\includegraphics[width=6cm]{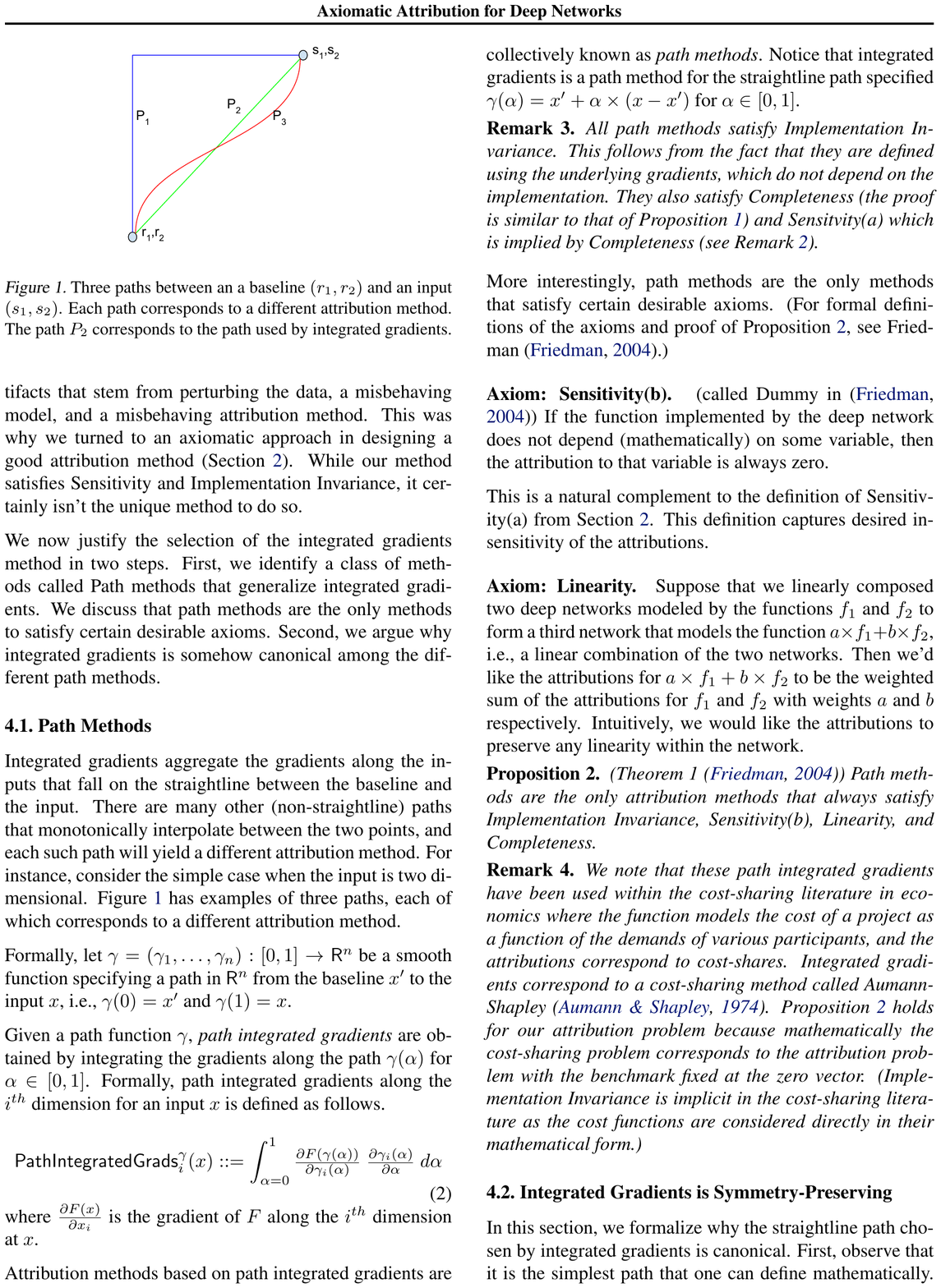}\\
\caption{Three paths between an a baseline $(r_1,r_2)$ and an input $(s_1,s_2)$. Each path corresponds to a different attribution method. The path $P_2$ corresponds to the path used by integrated gradients.}
\label{fig:classExamples}
\end{figure}

\section{Counterexample to Claim~\ref{Claim:FalsePathThm}}\label{claim1ce}
Let $F(x_1,x_2) = x_1 x_2$ be defined on $[0,1]^2$, $x' = (0,0)$, $x = (1,0)$. Suppose that $A^\g$ is defined by a monotone path. Note that $\frac{\partial F}{\partial x_i} \geq 0$, $\frac{\partial \g_i}{\partial t} \geq 0$ for all $i$. Thus $A^\g(x,x',F) \geq 0$ by Eq.~\ref{eq:PathMethod}. Thus any ensemble of monotone path methods has a non-negative output for the input $(x,x',F)$.

Let $\g'$ be the path that travels via a straight line from $(0,0)$ to $(0,1)$, then to $(1,1)$, and ends at $(1,0)$. $A^{\g'}$ satisfies completeness, linearity, sensitivity(b), and implementation invariance. $A^{\g'}(x,x',F) = (1,-1) \ngeq 0$. Thus not every baseline attribution that satisfies completeness, linearity, sensitivity(b), and implementation invariance is a probabilistic ensemble of monotone path methods.

\section{Proof of Lemma~\ref{lemma1}} \label{Proof:lemma1}

\begin{proof}
Suppose $A$ satisfies linearity and sensitivity(b), and $F\in\F$ has defined $\nabla F$ in $[a,b]$. Let and $x, x'\in [a,b]$, and let $G\in\F$ be such that $\frac{\partial F}{\partial x_i} = \frac{\partial G}{\partial x_i}$ for some $i$. Then $\frac{\partial (F-G)}{\partial x_i} = 0$, and by sensitivity(b), $A_i(x,x',F) - A_i(x,x',G) = A_i(x,x',F-G) = 0$. Thus $A_i(x,x',F) = A_i(x,x',G)$, and $A_i$ is a function solely of $x, x'$ and $\frac{\partial F}{\partial x_i}$. By extension, $A(x,x',F)$ is a function solely of x,x', and $\nabla F$.

\end{proof}

\section{Counterexamples to Other Uniqueness Claims and Proof with NDP}\label{OtherClaimsCE}

\subsection{Counterexample to \citet[Proposition 1]{xu2020attribution} and Proof with NDP}

In this section we present a counterexample to \citet[Proposition 1]{xu2020attribution} and establish the claim with the addition of NDP. The original statement of \citet[Proposition 1]{xu2020attribution} is as follows:
\begingroup
\addtolength\leftmargini{-0.18in}
\begin{quote}
\it ``Path methods are the only attribution methods that always satisfy Dummy, Linearity, Affine Scale Invariance and Completeness."
\end{quote}
\endgroup

As with the statement of \citet[Proposition 2]{sundararajan2017axiomatic}, the definition of ``path methods" here is informed by the work of \citet{friedman2004paths}, which is a referent given as proof of the theorem and specifies the statement. We first rigorously re-state the claim, filling in gaps. The work of \citet[Theorem 1]{friedman2004paths}, which was given above as Theorem~\ref{thm1}, is given in the context of monotone path methods and their ensembles. Since this theorem is referenced without further proof, it is assumed they do not mean to include non-monotone path methods, since otherwise the theorem would not be justifiably applied. Furthermore, it is known that if a path method satisfies the axioms, then an ensemble of path methods satisfies the axioms. Thus we assume they mean to include ensembles of path methods. Thus, we can interprete the statement as:
\begin{claim}{(\citet[Proposition 1]{xu2020attribution})}\label{claim_xu_prop}
If an attribution method satisfies dummy, linearity, ASI, and completeness, then that method is an ensemble of monotone path methods.
\end{claim}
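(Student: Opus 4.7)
The plan is twofold: first disprove the claim as literally stated by exhibiting a counterexample, and then recover the statement under the additional assumption of NDP.

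For the counterexample, my strategy mirrors the one used in Appendix~\ref{claim1ce} for Claim~\ref{Claim:FalsePathThm}. I would take $F(x_1,x_2)=x_1x_2$ on $[0,1]^2$ with $x'=(0,0)$ and $x=(1,0)$, and let $A^\o$ be the path method induced by a non-monotone detour, e.g., the polyline $(0,0)\to(0,1)\to(1,1)\to(1,0)$. Every path method automatically satisfies dummy, linearity, and completeness, so the only nontrivial property to check is ASI. ASI requires $\o(T(x),T(x'),t) = T(\o(x,x',t))$ for each coordinate-wise affine $T$ such that all relevant points remain in $[a,b]$. To make this hold, I would define $\o$ so that its intermediate vertices are coordinate-wise affine combinations of $x'$, $x$, and the bounding-box endpoints $a,b$; any such definition transforms covariantly under every affine $T$ that preserves $[a,b]$. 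Because monotone paths with a coordinate-wise non-decreasing $F$ yield a non-negative integrand in Eq.~\ref{eq:PathMethod}, no ensemble of monotone path methods can produce the negative second coordinate that $A^\o(x,x',F)$ does here, so the claim fails.

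Recovering the statement under NDP is essentially a direct appeal to Theorem~\ref{thm2}. Dummy is identical to sensitivity(b), so the hypotheses completeness $+$ linearity $+$ dummy $+$ NDP coincide with those of Theorem~\ref{thm2}; hence any $A\in\A^1$ satisfying them is already an ensemble of monotone path methods. The ASI hypothesis is then not needed for the \emph{existence} of such a representation, although it further restricts which probability measures $\mu^x$ on $\G^m(x,x')$ are admissible (for instance, ASI intuitively forces translation- and scale-equivariance of the ensemble weights).

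The main obstacle is the ASI verification in the counterexample. Unlike a naive counterexample with fixed numerical vertices, the detour path must be specified as a function of $(x,x')$ that genuinely commutes with coordinate-wise affine transformations, and it must keep every intermediate vertex inside $[a,b]$. Once this bookkeeping is handled, the remaining ingredients are routine: the negativity follows from the monotone-path vs.\ non-decreasing-$F$ sign analysis of Eq.~\ref{eq:PathMethod}, and the NDP direction reduces to a direct invocation of Theorem~\ref{thm2}.
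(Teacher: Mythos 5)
Your overall strategy (exhibit a non-monotone path method satisfying all four axioms, then recover the claim under NDP via Theorem~\ref{thm2}) is exactly the paper's, and the NDP half is correct. But the counterexample as you have instantiated it has a genuine gap in the ASI verification, and it is not just bookkeeping. You chose $x'=(0,0)$, $x=(1,0)$ with a detour through $(0,1)$ and $(1,1)$. For a path method to satisfy ASI the path family must commute with the coordinate-wise affine maps, i.e.\ $\o(T(x),T(x'),t)=T(\o(x,x',t))$, and a short computation shows this forces each vertex's $i$-th coordinate to be an affine combination of $x_i$ and $x'_i$ \emph{alone}: if a vertex is $\alpha x_i+\beta x'_i+\gamma a_i+\delta b_i$ with $\alpha+\beta+\gamma+\delta=1$, covariance under $T(y)_i=cy_i+d$ requires $(c-1)(\gamma a_i+\delta b_i)+d(\gamma+\delta)=0$ for all admissible $c,d$, hence $\gamma=\delta=0$. (Your hedge ``every affine $T$ that preserves $[a,b]$'' is not what ASI quantifies over: $T$ need only keep the four points $x,x',T(x),T(x')$ inside $[a,b]$, not map $[a,b]$ to itself, so vertices referencing $a,b$ do not transform covariantly.) But at your chosen input $x_2=x'_2=0$, so every affine combination of $x_2$ and $x'_2$ equals $0$, and no covariant path family can pass through height $1$ there. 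The detour you want is literally unreachable at that $(x,x')$.

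The paper sidesteps this by taking a non-degenerate pair $x=(1,1)$, $x'=(0,0)$, fixing a reference path $(0,0)\rightarrow(1,0)\rightarrow(1,1)\rightarrow(0,1)\rightarrow(0,0)\rightarrow(1,1)$ inside $[0,1]^2$, and defining $\g(x,x',t)=T_{x,x'}(\g_{\mathrm{ref}}(t))$ with $T_{x,x'}(y)=x'+(x-x')\odot y$; this keeps the path in $[x,x']\subseteq[a,b]$ and reduces ASI to the chain-rule computation in Appendix~\ref{OtherClaimsCE}. It then takes $F(x_1,x_2)=x_1x_2^2$ and computes the attribution $(-\tfrac{2}{3},\tfrac{5}{3})$, whose negative component rules out any ensemble of monotone path methods by your (correct) sign argument. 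Your counterexample can be repaired by moving to such a non-degenerate $(x,x')$ and defining the detour covariantly in $x,x'$ only; as written, it does not satisfy ASI. One further small omission: the paper's recovery under NDP also invokes Theorem~\ref{thm3} to cover the non-$C^1$ class $\F^2$, not only Theorem~\ref{thm2}.
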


Here we present a counterexample to this claim in the form of a non-monotone path method that satisfies the axioms. Let $n=2$, $[a,b] = [0,1]$. Define $\g(x,x',t)$ as follows. Set $T$ to be the affine transformation $T(y) = x' + (x-x')\odot y$. Inspired by \citet{friedman2004paths}'s treatment of ASI, we define a non-monotone path method $\g(x,x',t)$ as such. We set $\g(1,0,t)$ as the constant velocity path which travels in straight lines as such: $(0,0)\rightarrow(1,0)\rightarrow(1,1)\rightarrow(0,1)\rightarrow(0,0)\rightarrow(1,1)$. Define $\g(x,x',t) = T(\g(0,1,t))$. Thus $\g(x,x',t)$ is affine transformation of the reference path $\g(0,1,t)$. Note that $\g(0,1,t)\in[0,1]$, so if $x,x'\in[0,1]$, then the path $\g(x,x',t)\in[x,x']$, and will not exit out of the box $[0,1]$. This ensures that $A^\g(x,x',F)$ is well defined for any $x,x'$.

Note that $A^\g$ is a non-monotone path method satisfying completeness, dummy, and linearity. To complete the counterexample, it remains to show that $A^\g$ satisfies ASI. Let $T'$ be any affine transformation as in the definition of ASI. All that remains is to show that if $x,x',T'(x),T'(x')\in [a,b]$, then $A^\g(x,x',F)=A^\g(T'(x),T'(x'),F \circ T'^{-1})$. First we note that $T'(T(\g(1,0,t))) = \g(T'(T(1)),T'(T(0)),t)= \g(T'(x),T'(x'),t)$. Applying this, we show that for any index $i$:

\begin{equation}
\begin{split}
    A_i^\g(x,x',F) &= \int_0^1\pxi(\g(t))\frac{d \g_i}{d t} dt\\
    &= \int_0^1 \pxi(T(\g(1,0,t)))\frac{d (T(\g(1,0,t)))_i}{d t} dt\\
    &= \int_0^1 \pxi(T'^{-1}(T'(T(\g(1,0,t)))))\frac{d (T'^{-1}(T'(T(\g(1,0,t)))))_i}{d t} dt\\
    &= \int_0^1 \pxi(T'^{-1}(\g(T'(x),T'(x'),t)))\frac{d (T'^{-1}(\g(T'(x),T'(x'),t)))_i}{d t} dt\\
    &= \int_0^1 \pxi(T'^{-1}(\g(T'(x),T'(x'),t)))\frac{d T'^{-1}_i}{d x_i}(\g(T'(x),T'(x'),t)))\frac{d (\g(T'(x),T'(x'),t))_i}{d t} dt\\
    &= \int_0^1 \frac{\partial (F \circ T'^{-1})}{\partial x_i}(\g(T'(x),T'(x'),t))\frac{d (\g(T'(x),T'(x'),t))_i}{d t} dt\\
    &= A_i^\g(T'(x),T'(x'),F \circ T'^{-1})
\end{split}
\end{equation}

Thus $A^\g$ is an attribution method that satisfies dummy, linearity, ASI, and completeness, but is not in the form of an ensemble of monotone path methods. We now prove that $A^\g$ is not equivalent to a ensemble of monotone path methods. We do this by introducing a context where $A^\g$ gives negative attributions, and note that monotone path methods (and thus ensembles of monotone path methods) cannot give negative attributions in this context.

Let $F(x_1,x_2) = x_1x_2^2$. We calculate $A^\g(1,0,F)$ by calculating the five straight paths that comprise it. We denote the path $P^1$ to be the path from $(0,0)\rightarrow (1,0)$, $P^2$ to be the path from $(1,0)\rightarrow (1,1)$, and so on. By this decomposition, we have $A^\g(1,0,F) = \sum_{i=1}^5 \I(P^i,F)$, where $P^i$ indicates the input and baseline of IG in the obvious way. Calculating $\I(P^1,F)$,..., $A(P^4,F)$ is simple if we observe that $\pxi=0$ for one variable, which causes that component to be zero, and then apply completeness. This yields: $\I(P^1,F) = (0,0)$, $\I(P^2,F) = (0,1)$, $\I(P^3,F) = (-1,0)$, and $\I(P^4,F) = (0,0)$.

Because parameterization will not affect the integral, we parameterize $P^5(t) = (t,t)$. Calculating $A^{P^5}(1,0,F)$,

\begin{align*}
    \I_1(P^5,F) &= \int_0^1 \frac{\partial F}{\partial x_1} \frac{d P^5_1}{d t} dt\\
    &= \int_0^1 t^2 dt\\
    &= \frac{1}{3}
\end{align*}

By completeness we get $\I(P^5,F) = (\frac{1}{3},\frac{2}{3})$. Thus, $A^\g(1,0,F) = (-\frac{2}{3},\frac{5}{3})$, and $A^\g$ can give negative values to a non-decreasing $C^1$ function with baseline 0 and input 1. Because monotone path methods cannot give negative values in this case, and by extension, ensemble of monotone path methods cannot either, $A^\g$ cannot be represented as an ensemble of monotone path methods.

We note that many non-monotone, piece-wise smooth paths could suffice for the counterexample. We also note that since non-monotone path methods satisfy the above axioms, it is an open questions whether other methods that are not an ensemble of path methods also satisfy the axioms.

We now establish Claim~\ref{claim_xu_prop} with the additional assumption of NDP for a particular class of functions.

\begin{corollary}{(Claim~\ref{claim_xu_prop} with NDP for $\F^1\cup\F^2$ Functions)}
Let $x'$ be fixed, $x\in[a,b]$. Suppose $A\in\A^2$ satisfies dummy, linearity, completeness, ASI, and NDP. 1) If $F\in\F^1$, $A(x,x',F)$ is equivalent to the usual ensemble of path methods. 2) If  $F\in\F^2$,  let $\mu^x$ be the measure on $\G^m(x.x')$ from Theorem~\ref{thm2}. If $A(x,F)$ is defined, and for almost every path $\g\in\G^m(x,x')$ (according to $\mu^x$), $\partial D\cap P^\g$ is a null set, then $A(x,F)$ is equivalent to the usual ensemble of path methods.
\end{corollary}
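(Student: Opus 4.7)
The plan is straightforward: observe that the hypotheses of the corollary already subsume those of Theorem~\ref{thm2} and Theorem~\ref{thm3}, so the proof reduces to a direct invocation of those theorems after matching axiom names. The paper defines Sensitivity(b)/Dummy as ``$\partial_i F \equiv 0 \Rightarrow A_i(x,x',F)=0$,'' so ``dummy'' in the statement of Claim~\ref{claim_xu_prop} and ``sensitivity(b)'' in Theorems~\ref{thm2}--\ref{thm3} are the same axiom under different names. Consequently, from the five assumed axioms (dummy, linearity, completeness, ASI, NDP) we obtain, in particular, the four needed by Theorem~\ref{thm2}; ASI, while essential to setting up the original (false) Claim~\ref{claim_xu_prop} and to ruling out competing methods in the cost-sharing treatment, is superfluous for the conclusion once NDP is in place.

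For part (1), I would restrict the given $A\in\A^2$ to the subclass $\F^1$ of its domain; call this restriction $\tilde A\in\A^1$. Because completeness, linearity, sensitivity(b), and NDP are each axioms quantified uniformly over $F$, they are automatically preserved under this restriction. Invoking Theorem~\ref{thm2} for $\tilde A$ at the fixed baseline $x'$ then yields that $A(x,x',F)=\tilde A(x,x',F)$ is an ensemble of monotone path methods for every $F\in\F^1$, which is exactly what part (1) asserts.

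For part (2), the hypotheses are tailored to match Theorem~\ref{thm3} verbatim: $A\in\A^2$ satisfies completeness, linearity, sensitivity(b), and NDP; $F\in\F^2$; the measure $\mu^x$ on $\G^m(x,x')$ is the same one produced by Theorem~\ref{thm2}; and the requirement that $\partial D\cap P^{\g}$ is a null set for $\mu^x$-almost every $\g$ is identical. Hence Theorem~\ref{thm3} applies directly and gives the conclusion.

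The only real ``obstacle'' here is noticing that ASI plays no essential role once NDP is assumed: the substantive work---lifting Friedman's cost-sharing theorem into the DL function space via coordinate shifts, bounding derivatives to manufacture a non-decreasing auxiliary function, and handling the measure-zero non-smooth boundary in $\F^2$---has already been packaged into Theorems~\ref{thm2} and~\ref{thm3}, so the corollary is essentially a bookkeeping restatement under a renamed axiom. A completely parallel argument would salvage \citet[Cor 4.4]{sundararajan2020many} by the same route, since the additional axioms there (symmetry, proportionality, ASI) likewise become redundant once NDP plus the core four axioms force the ensemble-of-path-methods form.
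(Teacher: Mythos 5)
Your proposal is correct and matches the paper's own proof, which simply states that the two parts are specific cases of Theorems~\ref{thm2} and~\ref{thm3}; your additional observations (that dummy and sensitivity(b) are the same axiom and that ASI is not needed for the conclusion) are accurate elaborations of that same one-line argument.
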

\begin{proof}
These are two specific cases of Theorems~\ref{thm2} and \ref{thm3}.
\end{proof}

\subsection{Counterexample to \citet[Thm 4.1]{sundararajan2020many} and Proof with NDP}

The original statement of \citet[Thm 4.1]{sundararajan2020many} is as follows:
\begingroup
\addtolength\leftmargini{-0.18in}
\begin{quote}
``\it (Reducing Model Explanation to Cost-Sharing). Suppose there is an attribution method that satisfies Linearity and ASI. Then for every attribution problem with explicand $x$, baseline $x'$ and function $f$ (satisfying the minor technical condition that the derivatives are bounded), then there exist two costsharing problems such that the resulting attributions for the attribution problem are the difference between cost-shares for the cost-sharing problems."
\end{quote}
\endgroup

 \citet{sundararajan2020many} defines ``cost-sharing problems" to be attributions where $x'=0, x\geq 0$, and $F$ is non-decreasing in each component. Interprating ''cost-shares", we look to the referenced work, \citet{friedman1999three}, which restricts cost-share solutions to non-negative solutions to cost-sharing problems. A restatement of the theorem is then:

\begin{claim}{(\citet[Proposition 1]{sundararajan2020many})}\label{claim_sundararajan_prop_2}
Suppose $A$ is an attribution method that satisfies linearity and ASI. Then for every attribution problem $x,x'$, and $F$ with bounded first derivative:
\begin{enumerate}
    \item There exists $z,\bar{z}\geq 0$, $G,H$ non-decreasing \hfill (There are 2 cost-share problems)
    \item $A(x,x',F) = A(z,0,G) - A(
    \bar{z},0,H)$ \hfill (The original attribution equals the difference\newline \null\hfill between attributions for the cost-share problems)
    \item  $A(z,0,G)$, $A(\bar{z},0,H)\geq 0$ \hfill ($A$ gives cost-share solutions to the cost-share problems)

\end{enumerate}
\end{claim}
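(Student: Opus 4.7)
The plan is to parallel the previous subsection: refute Claim~\ref{claim_sundararajan_prop_2} by re-using the non-monotone path method $A^\g$ already constructed, and then state and sketch a corrected version that adds NDP to the hypotheses so that the reduction to cost-sharing in \citet{sundararajan2020many} remains valid. The anticipated obstacle is that the claim literally permits \emph{any} pair of cost-sharing problems $(z, G)$, $(\bar z, H)$, not only the natural $F = F^+ - F^-$ decomposition used in the proof of \citet{sundararajan2020many}; I plan to address this by showing that the very first step of their argument---applying ASI to standardize the baseline to $0$ and decomposing $F$ into monotone pieces---already fails the non-negativity requirement built into the definition of a cost-share solution.

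First, I would take $A^\g$ from the counterexample to Claim~\ref{claim_xu_prop}: it satisfies linearity and ASI (along with dummy and completeness) and produces $A^\g((1,1), 0, F) = (-\tfrac{2}{3}, \tfrac{5}{3})$ for the non-decreasing $F(x_1, x_2) = x_1 x_2^2$. Because $F$ is non-decreasing on $[0,1]^2$ and $x' = 0$, the pair $((1,1), F)$ is itself a cost-sharing problem in the sense of \citet{sundararajan2020many}. Carrying out their reduction (use ASI to land at baseline $0$, then write $F = F^+ - F^-$ with $F^+ = F$ and $F^- \equiv 0$) collapses the right-hand side to $A^\g((1,1), 0, F)$, whose first coordinate is negative. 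Since a cost-share solution in the convention of \citet{friedman1999three} must be non-negative, $A^\g((1,1), 0, F)$ cannot qualify as one, breaking the claim exactly along the path of argument it relies on. The leftover work is to argue that the freedom to pick an arbitrary $(z, G, \bar z, H)$ does not rescue the claim: any valid decomposition still has to be produced by an ASI-permitted reparameterization, and since $A^\g$'s ASI-orbit of any cost-sharing problem continues to exhibit negative components, no reparameterization can avoid the obstruction.

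For the NDP-corrected version, I would state: if $A \in \A^1$ satisfies linearity, ASI, and NDP, then Claim~\ref{claim_sundararajan_prop_2}'s conclusion holds. The sketch follows \citet{sundararajan2020many} verbatim, now with NDP repairing the failing step. Use ASI to transport $(x, x', F)$ to $(|x-x'|, 0, \tilde F)$; since $\tilde F \in \F^1$ has bounded derivatives on the compact domain, decompose $\tilde F = \tilde F^+ - \tilde F^-$ with both pieces non-decreasing (the same trick already used in the proof sketch of Theorem~\ref{thm2}, by absorbing a sufficiently large linear term). NDP then guarantees $A(|x-x'|, 0, \tilde F^\pm) \geq 0$, so both attributions are bona fide cost-share solutions, and linearity delivers the identity $A(x, x', F) = A(|x-x'|, 0, \tilde F^+) - A(|x-x'|, 0, \tilde F^-)$, completing the corrected proof.
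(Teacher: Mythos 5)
Your overall architecture (refute the claim, then repair it with NDP) matches the paper's, and your corrected statement with its sketch --- use ASI to move the baseline to $0$, absorb a linear term $c^Ty$ to split the transformed $F$ into two non-decreasing pieces, and invoke NDP to certify non-negativity of both attributions --- is essentially the paper's proof of the repaired theorem. The gap is in the counterexample.

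The claim is purely existential in $(z,G,\bar z,H)$: it asserts that \emph{some} pair of cost-sharing problems with non-negative attributions realizes the difference, not that the particular decomposition $F=F^+-F^-$ arising in \citet{sundararajan2020many}'s own argument does. Showing that one decomposition fails requirement~3 refutes nothing, and your proposed witness $A^\g$ in fact satisfies the claim's conclusion on the very instance you test. Take $\bar z=z=(1,1)$, $H(y)=y_1$, $G(y)=y_1y_2^2+y_1$; both are non-decreasing on $[0,1]^2$, and by linearity of path methods in $F$ one gets $A^\g(z,0,G)=(-\tfrac23,\tfrac53)+(1,0)=(\tfrac13,\tfrac53)\geq 0$ and $A^\g(\bar z,0,H)=(1,0)\geq 0$, with difference $(-\tfrac23,\tfrac53)=A^\g((1,1),0,F)$. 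This rescue is generic: for any path method with bounded path, $A^\g(x,0,F+c^Ty)=A^\g(x,0,F)+c\odot x$, so a sufficiently large $c\geq 0$ simultaneously makes $F+c^Ty$ non-decreasing and both attributions non-negative. Hence no path method can refute this claim, and your ``leftover work'' --- arguing that every decomposition must arise from an ASI reparameterization of the original problem --- is not a constraint the claim imposes and cannot be established. What is needed is an $A$ satisfying linearity and ASI for which requirement~3 fails for \emph{every} admissible $(\bar z,H)$. The paper takes $n=1$ and $A(x,x',F):=F(x')-F(x)$: then $A(\bar z,0,H)=H(0)-H(\bar z)\leq 0$ for all $\bar z\geq 0$ and non-decreasing $H$, while $A(1,0,\,y\mapsto y)=-1$ forces $A(\bar z,0,H)\geq 1$, a contradiction that no choice of decomposition can evade.
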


Let $n=1$, and define the attribution method $A$ by $A(x,x',F) := F(x')-F(x)$. Note that $A$ satisfies linearity since:

\begin{equation}
\begin{split}
    A(x,x',F_1+F_2) &= F_1(x')+F_2(x')-F_1(x)-F_2(x) = A(x,x',F_1)+A(x,x',F_2)
\end{split}
\end{equation}

$A$ also satisfies ASI since for any linear transformation $T$ we have:

\begin{equation}
\begin{split}
    A(T(x),T(x'),F \circ T^{-1}) &= F \circ T^{-1}(T(x')) - F \circ T^{-1}(T(x))\\
    &= F(x')-F(x) \\
    &= A(x,x',F)
\end{split}
\end{equation}
Now let $x=1$, $x' = 0$, $F(y) := y$. We proceed by contradiction. Suppose there exists $z,\bar{z}\geq 0$, $G,H$ non-decreasing such that $A(z,0,G)$, $A(\bar{z},0,H)\geq 0$ and $A(1,0,F) = A(z,0,G) - A(\bar{z},0,H)$. Now observe that $A(1,0,F) = F(0) - F(1) = -1$, which implies $A(\bar{z},0,H)>0$. However, $A(\bar{z},0,H) = H(0) - H(\bar{z})\leq 0$, a contradiction. Thus the theorem does not hold for $A$ with the stipulated $x,x',F$, and is false.

We now establish Claim~\ref{claim_sundararajan_prop_2} with the addition of NDP.

\begin{theorem}{(Claim~\ref{claim_sundararajan_prop_2} with NDP)}
Suppose $A$ is an attribution method that satisfies linearity, ASI, and NDP. Then for every $x,x'$, and $F$ with bounded first derivative:
\begin{enumerate}
    \item There exists $z,\bar{z}\geq 0$, $G,H$ non-decreasing \hfill (There are 2 cost-share problems)
    \item $A(x,x',F) = A(z,0,G) - A(
    \bar{z},0,H)$ \hfill (The original attribution equals the difference\newline \null\hfill between attributions for the cost-share problems)
    \item  $A(z,0,G)$, $A(\bar{z},0,H)\geq 0$ \hfill ($A$ gives cost-share solutions to the cost-share problems)

\end{enumerate}
\end{theorem}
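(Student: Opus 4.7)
The plan is to follow the strategy outlined for the original \citet[Thm 4.1]{sundararajan2020many}, with NDP supplying the non-negativity step that was missing in the cost-sharing argument. Concretely: use ASI to transport the attribution problem to one with baseline at the origin and non-negative explicand, decompose the transported function as a difference of two coordinate-wise non-decreasing functions via the bounded-derivative hypothesis, apply linearity to split the attribution, and invoke NDP on each piece.

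First I would build an affine coordinate transformation $T$ that sends $x'$ to the origin and $x$ to $z := (|x_1 - x_1'|, \ldots, |x_n - x_n'|)$. For each coordinate $i$ let $T_i(y_i) = y_i - x_i'$ if $x_i \geq x_i'$ and $T_i(y_i) = x_i' - y_i$ otherwise; each is a valid one-coordinate affine transformation in the form appearing in the ASI axiom (with $c = \pm 1$). Applying ASI one coordinate at a time yields
$$A(x, x', F) = A(z, 0, \tilde{F}), \qquad \tilde{F} := F \circ T^{-1},$$
where $z \geq 0$ and $\tilde{F}$ inherits the bounded-derivative property from $F$.

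Next I would decompose $\tilde{F}$. Pick any $c > 0$ with $|\partial \tilde{F}/\partial y_i| \leq c$ for every $i$, and define
$$G(y) := \tilde{F}(y) + c \sum_{i=1}^n y_i, \qquad H(y) := c \sum_{i=1}^n y_i.$$
Both $G$ and $H$ are non-decreasing in every coordinate (their partial derivatives are non-negative), and $G - H = \tilde{F}$. By linearity, $A(z, 0, \tilde{F}) = A(z, 0, G) - A(z, 0, H)$. Since any monotone path $\g$ from $0$ to $z$ is coordinate-wise non-decreasing in $t$ and $G, H$ are themselves non-decreasing in each coordinate, the compositions $G \circ \g$ and $H \circ \g$ are non-decreasing in $t$; thus $G$ and $H$ are non-decreasing from $0$ to $z$ in the sense required by NDP. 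NDP therefore gives $A(z, 0, G) \geq 0$ and $A(z, 0, H) \geq 0$, and setting $\bar{z} := z$ realizes all three conclusions simultaneously.

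The main obstacle I expect is the bookkeeping required by ASI's domain restriction, which demands that $x, x', T(x), T(x')$ all lie in the attribution method's fixed box $[a,b]$. In the cost-sharing setting this is harmless because the ambient domain is a translation- and reflection-stable orthant, but in the DL setting one must either assume $A$ is defined on a box large enough to contain $T([a,b])$, or explicitly extend $F$ and the domain of $A$ so that ASI's hypothesis is satisfied for the specific $T$ chosen above. Once this domain extension is made explicit, the remainder of the argument reduces to the three routine ingredients -- the coordinate-wise ASI reduction, the elementary decomposition, and a single application of linearity followed by NDP.
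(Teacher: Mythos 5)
Your argument is correct and follows essentially the same route as the paper's proof: ASI to move the problem to baseline $0$ with non-negative explicand, a bounded-derivative shift by a linear term to produce two non-decreasing functions whose difference is the transported $F$, then linearity to split and NDP to certify non-negativity of each piece. Your added remark about the domain bookkeeping for ASI is a fair point that the paper's own proof leaves implicit, but it does not change the substance of the argument.
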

\begin{proof}
We follow the proof from \citet[Proposition 1]{sundararajan2020many}, but employ NDP. Since $A$ satisfies ASI, there exists an affine transformation $T$ such that $A(x,x',F) = A(T(x), 0,F \circ T^{-1})$, where $T(x)\geq 0$ and $F \circ T^{-1}$ has bounded derivative. Since $F \circ T^{-1}$ has a bounded derivative, there exists a $c\in\R^n$ such that $F \circ T^{-1}+c^Ty$, $c^Ty$ are non-decreasing. By linearity, $A(T(x),0,F \circ T^{-1}) = A(T(x),0,F \circ T^{-1}+c^Ty) - A(T(x),0,c^Ty)$. Because $A$ satisfies NDP, we have $A(T(x),0,F \circ T^{-1}+c^Ty)$, $A(T(x),0,c^Ty)\geq 0$.
\end{proof}

\section{Comment on Conjecture 1}\label{conj1comm}

If no qualifications are put on the set of paths that $\mu^x$ is supported on, then $A$ may take on infinite values, contradicting completeness, or may simply be undefined. Consider the following example. Let $n=2$, $[a,b] = [0,1]$. Let $F(y) = y_1y_2$, $x = (1,1)$, $x' = (0,0)$. Define the path $\g^n(x,x',t)$ to be the path obtained by traveling completely around the boundary of the domain clockwise $n$ times, then following the straight line from $(0,0)$ to $(1,1)$. We define $\g^{-n}(x,x',t)$ similarly to $\g^n$, but with counterclockwise paths. $A^{\g^0}(x,x',F) = (0.5,0.5)$. $A^{\g^n}(x,x',F) = (0.5+n,0.5-n)$, $n\in \Z$. Now define the support of $\mu^x(\g)$ to be $\{\g^{(-2)^k}: k \in \N\}$. We then define $\mu^x$ on it's support to be $\mu^x(\g^{(-2)^k}) = \frac{1}{2^{k}}$.

\begin{align*}
    & A(x,x',F)\\
    =& \int_{\g \in \G(x,x')} \A^\g(x,x',F) d\mu^x(\g)\\
    =& \sum_{k=1}^\infty (0.5+(-2)^k,0.5-(-2)^k) \frac{1}{2^k}\\
    =& \sum_{k=1}^\infty (\frac{0.5}{2^k} + (-1)^k, \frac{0.5}{2^k} - (-1)^k)
\end{align*}

The above sum is not convergent in either component, so $A(x,x',F)$ is not defined.

A similar construction only allowing clockwise paths may yield $A(x,x',F) = (\infty, -\infty)$, contradicting completeness.

\section{Proof of Theorem~\ref{thm2}}\label{Proof:Thm2}
\begin{proof}

We begin by supposing the assumptions. Let $x'$ be fixed, $\F^1$ and $\A^1$ be as stipulated, and $A\in\A^1$. We introduce the notation $\A^1(c,d)$, $c,d\in \R^n$, to be defined as the set $\A^1$, but with specified region $[c,d]$ instead of $[a,b]$. The set $\F^1(c,d)$ is defined likewise.

$2) \rightarrow 1)$: Suppose $A$ is an ensemble of monotone path methods as in the theorem statement. It is trivial to show that $A$ satisfies linearity, completeness, and sensitivity(b). Suppose $F$ is non-decreasing from $x'$ to some $x$. Then for any monotone path $\g$ from $x'$ to $x$, $A^\g(x,x',F)\geq 0$. Thus $A(x,x',F)\geq 0$, and $A$ satisfies NDP.

$1) \rightarrow 2)$: Let $A$ satisfy completeness, linearity, sensitivity(b), and NDP. Let $F\in\F^1(a,b)$ and $x\in[a,b]$. WLOG, we may assume that $F(x') = 0$, since if not, consider $G(y):=F(y)-F(x')$ and apply Lemma~\ref{lemma1}.

Our strategy will be to first define a transform such that $A$ can be represented as a baseline attribution with baseline $0$. Define $T:\R^n\rightarrow\R^n$ as $T_i(y) = (y_i-x'_i)\times(-1)^{\mathbbm{1}_{x_i'>x_i}}$. One can think of $T$ as a transform from the baseline $x'$ space to the baseline $0$ space. $T$ transforms $[a,b]$, by shifting and reflections about axes, into some other rectangular prism $[c,d]$, for some $c,d\in\R^n$. More importantly, $T$ transforms $x'$ to $0$ and $x$ to $|x-x'|$. Specifically, we get $T([x,x']) = [|x-x'|,0]$, with $T(x') = 0$ and $T(x) = |x-x'|$. Note further that $T$ transforms the set of monotone paths from $x'$ to $x$ into the set of monotone paths from $0$ to $|x-x'|$, or $T(\G^m(x,x')) = \G^m(|x-x'|,0)$. $T$ is one-to-one and has a well defined inverse over $\R^n$. So one can think of $T^{-1}$ as a transform from the baseline 0 space to the baseline $x'$ space.

For $y,y'\in [c,d]$, $G\in \F^1(c,d)$, define $A' \in \A^1(c,d)$ by $A'(y,y',G) := A(T^{-1}(y),T^{-1}(y'), G  \circ  T)$. Essentially $A'$ is a reformulation of $A$ in the baseline 0 space. By definition, $A(x,x',F) =  A'(|x-x'|,0,F  \circ  T^{-1})$. $A'$ satisfies completeness, linearity, sensitivity(b), and NDP.

Note that to apply Theorem~\ref{thm1}, we must restrict the domain of $A'$ to not include inputs with negative components. If we restrict the domain of $A'$, it is not clear that this attribution will behave the same. It seems possible that the attribution $A'(x,x',F)$ depends on the behavior of $F$ in the domain we want to remove. If this were the case, issues could arise, such as the restricted $A'$ not being equivalent to the unrestricted $A'$. To address this issue, we turn to the development of an important lemma.

\begin{lemma}\label{lemma3}
If $A\in\A^1$ satisfies completeness, linearity, sensitivity(b), and NDP, then $A(x,x',F)$ is determined by $x,x'$ and the behavior of $F$ inside $[x,x']$.
\begin{proof}
Suppose $G,H\in\F^1$ have the same behavior in $[a,b]$. So for $y\in[a,b]$, $G(y)-H(y) = 0 = H(y)-G(y)$. Thus both are non-decreasing from $x'$ to $x$. Because $A$ satisfies NDP, $A(x,x',H-G)\geq0$, and $A(x,x',G-H) = - A(x,x',H-G)\geq0$. Thus $0 = A(x,x',G) - A(x,x',H)$, and $A(x,x',G) = A(x,x',H)$.
\end{proof}

\end{lemma}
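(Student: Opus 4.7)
The plan is a short two-step reduction that uses only the linearity and NDP axioms. First, I would unpack the assertion "determined by the behavior of $F$ inside $[x,x']$" as the statement: whenever $G,H\in\F^1$ agree pointwise on $[x,x']$, one has $A(x,x',G)=A(x,x',H)$. Setting $F:=G-H$ and applying linearity gives $A(x,x',G)-A(x,x',H)=A(x,x',F)$, so the whole lemma reduces to the single claim that $A(x,x',F)=0$ for every $F\in\F^1$ that vanishes identically on $[x,x']$.

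Second, I would apply NDP to both $F$ and $-F$. The key observation is that any monotone path $\g(x,x',\cdot)$ has each coordinate $\g_i(t)$ moving monotonically between $x'_i$ and $x_i$, so its image lies entirely in the hyperrectangle $[x,x']$. Since $F\equiv 0$ there, $F(\g(t))\equiv 0$, which is (vacuously) non-decreasing in $t$; the identical argument applies to $-F$. Hence both $F$ and $-F$ are non-decreasing from $x'$ to $x$ in the sense required by NDP, so $A(x,x',F)\geq 0$ and $A(x,x',-F)\geq 0$. Combining these with $A(x,x',-F)=-A(x,x',F)$ from linearity forces $A(x,x',F)=0$, completing the reduction.

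The only step demanding any care is the application of NDP to $-F$: one must recognize that "non-decreasing from $x'$ to $x$" is a condition checked only along monotone paths from $x'$ to $x$, all of which remain inside $[x,x']$, so the (possibly wild) behavior of $F$ outside $[x,x']$ never enters the hypothesis. This is exactly the feature that distinguishes NDP from a global monotonicity requirement and is what makes the locality conclusion possible. There is no real obstacle beyond noticing this containment; in particular, neither sensitivity(b) nor completeness is needed for this lemma, even though the hypothesis lists them.
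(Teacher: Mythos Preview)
Your proposal is correct and follows essentially the same approach as the paper's proof: reduce to the difference $F=G-H$, observe that $F$ (and $-F$) is identically zero along every monotone path from $x'$ to $x$ and hence non-decreasing, then apply NDP to both $F$ and $-F$ together with linearity to force $A(x,x',F)=0$. Your write-up is in fact a bit more explicit than the paper's about why monotone paths remain in $[x,x']$, and your remark that completeness and sensitivity(b) are not actually used is accurate.
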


Now we define a BAM to apply Theorem~\ref{thm1} on. Define $A'':[T(x),0]\times\F^0(T(x),0)\rightarrow \R$ as such: $A''(y,G) := A'(y,0,H)$, where $H\in\F^1[c,d]$ is any function such that $H=G$ when restricted to $[T(x),0]$. $A''$ is a properly defined BAM by Lemma~\ref{lemma3}. Note that for $G\in \F^1$ with $G(0) = 0$, $G$ non decreasing, and $y\in[T(x),0]$, we may go backwards and say $A'(y,0,G) = A''(y,G)$. Furthermore, $A''$ satisfies completeness, linearity, sensitivity(b), and NDP.

Write $F^0 = F \circ T^{-1}$. $F^0$ is a $C^1$ function defined on a compact domain, so $\nabla F^0$ is bounded. So there exists $c\in\R^n$ such that $\nabla (F^0(y) + c^Ty ) = \nabla F^0 + c\geq 0$ on the compact domain. This implies that $F^0(y) + c^Ty  $ is non-decreasing, $C^1$, with $F^0(0) = 0$. So $F^0(y) + c^Ty \in\A^0$. Employing Theorem~\ref{thm1}, there exists a measure $\mu$ such that:

\begin{align*}
    &A_i(x,x',F(y) + c^TT(y) ) \\
    =& A_i'(T(x),0,F^0(y) + c^Ty)\\
    =& A_i''(T(x),F^0(y) + c^Ty)\\
    =& \int_{\g\in \G^m(T(x),0)} A_i^{\g}(T(x),0,F^0(y) + c^Ty ) \times d\mu(\g)
\end{align*}

Inspecting the interior term, we find that for $\g$ a monotone path from $0$ to $T(x)$,

\begin{align*}
    \hspace{3.05cm}&A_i^{\g}(T(x),0,F^0(y) + c^Ty )\\
    =& \int_0^1 [\frac{\partial F^0}{\partial \g_i} + c_i] \frac{d \g_i}{d t} dt\\
    =& \int_0^1 \frac{\partial F  \circ  T^{-1}}{\partial \g_i} \times \frac{d \g_i}{d t} dt + \int_0^1 c_i \frac{d \g_i}{d t} dt \\
    =& \int_0^1 \frac{\partial F}{\partial (T^{-1} \circ \g)_i}(T^{-1}(\g(t)) \times \frac{\partial T^{-1}_i}{\partial \g_i}(\g(t)) \times \frac{d \g_i}{d t} dt + c_i(T_i(x)-T_i(x'))\\
    =& \int_0^1 \frac{\partial F}{\partial (T^{-1} \circ \g)_i}(T^{-1}(\g(t))) \times \frac{\partial (T^{-1}  \circ  \g)_i}{\partial t} dt + c_iT_i(x)\\
    =& A_i^{(T^{-1}  \circ  \g)}(x,x',F) + c_iT_i(x)
\end{align*}

Set $\mu'(\g) := \mu(T(\g))$ so that $\mu'$ is a measure on the monotone paths from $x'$ to $x$. Combining previous results, we have,
\begin{align*}
    &A_i(x,x',F(y)) + A_i(x,x',c^TT(y) )\\
    =&A_i(x,x',F(y) + c^TT(y) )\\
    =& \int_{\g\in \G^m(T(x),0)} A_i^{\g}(T(x),0,F^0(y) + c^Ty ) \times d\mu(\g)\\
    =& \int_{\g\in \G^m(T(x),0)} [A_i^{(T^{-1}  \circ  \g)}(x,x',F) + c_iT_i(x)] \times d\mu(\g)\\
    =& \int_{\g \in \G^m(T(x),0)} A_i^{(T^{-1}  \circ  \g)}(x,x',F) \times d\mu(\g) + c_iT_i(x) \\
    =& \int_{\g \in \G^m(x,x')} A_i^{\g}(x,x',F) \times d\mu(T(\g))+ c_iT_i(x)\\
    =& \int_{\g \in \G^m(x,x')} A_i^{\g}(x,x',F) \times d\mu'(\g)+ c_iT_i(x)
\end{align*}

From a previous result, $A_i(x,x',G)$ is a function only of $x,x'$ and $\frac{\partial G}{\partial x_i}$. So $A_i(x,x',c^TT(y)) = A_i(x,x',c_iT_i(y))$. By sensitivity(b), $A_j(x,x',c_iT_i(y)) = 0$ for $j\neq i$. So by completeness, $A_i(x,x',c_iT_i(y)) = c_iT_i(x) - c_iT_i(x') = c_iT_i(x)$. Subtracting the term from both sides of the above equation yields:

$$A_i(x,x',F(y)) = \int_{\g \in \G^m(x,x')} A_i^{\g}(x,x',F) \times d\mu'(\g)$$

Note that $A''$ is determined by $A$ and choice of $x$ and $x'$, since $T$ is determined by $x,x'$. Note further that $A''$ and $T$ determines $\mu'$. So for any $F\in\F^1$ and fixed $x'$, we can index on $x$ to get $\mu'^x$, a probability measure on $\G(x,x')$. Thus for a fixed $x'$, $F\in\F^1$, we have:

\begin{align*}
    &A_i(x,x',F)= \int_{\g \in \G^m(x,x')} A_i^{\g}(x,x',F) d\mu'^x(\g)
\end{align*}

\end{proof}

\section{Proof of Lemma~\ref{lemma2}}\label{Proof:lemma2}
\begin{proof}
Proceed by induction. Let $F:\R^n\rightarrow \R$ be a one-layer feed forward neural network with $F\in\F^2$. If the layer is an analytic function, then $F$ is analytic since the composition of analytic functions is analytic. Precisely, $F$ is analytic in the interior of $[x,x']$, which has a boundary of measure $0$. If the layer is a max function, then $F$ is analytic except on the boundary of $[x,x']$ and potentially some hyper-plane, which is a null set. In either case, the result is obtained.

Now suppose that $F:\R^n\rightarrow \R^m$ is a $k$-layer feed forward neural network with $F_i\in\F^2$ for each $i$. Further suppose $[x,x']$ can be partitioned into and open set $D_i$ and $\partial D_i$, where each output $F_i$ is analytic on $D_i$ and $\partial D_i$ a null set. If $H:\R^m\rightarrow \R$ is an analytic function, then $H \circ F$ is analytic on $D = \cap_{i=1}^n D_i$, $\partial D = \cup_{i=1}^n \partial D_i$ is a null set, and $D\cup \partial D$ is a null set.

Now suppose instead that $H$ is a max function. Since the max of more than two functions is the composition of the two-input max function, we will only consider the two-input max. Let $H$ be the max function of the $i^{th}$ and $j^{th}$ components, so that $H \circ F = \max(F_i,F_j)$. First, we inspect points in $D_i \cap D_j$. Let $y\in D_i \cap D_j$, and consider three disjoint cases: 1) if $F_i(y) - F_j(y) \neq 0$, then by continuity of $F$, $F_i - F_j \neq 0$ in some ball centered around $y$. This implies that either $H \circ F \equiv F_i$ or $H \circ F \equiv F_j$ in some ball around $y$. Thus $H \circ F$ is analytic at $y$. Note the set of case 1 points form an open set. Denote the set of case one points $D^1$. 2) If $F_i(y) = F_j(y)$, and $F_i = F_j$ for some ball centered around $y$, then $H = F_i = F_j$ in that ball, and $H \circ F$ is analytic at $y$. Note the set of case 2 points form an open set. Denote the set of case 2 points $D^2.$ 3) We denote the set of all other points in $D_i \cap D_j$ by $D^3$. For $y\in D^3$, $F_i(y) = F_j(y)$, but $F_i \neq F_j$ for some point in every open ball centered at $y$. Set $D = D^1 \cup D^2$, and note that $D$ is open, $H \circ F$ is analytic in $D$.

Since $D_i \cap D_j$ is an open set, there exists a countable sequence of open balls, $\{B_i\}$, such that $D_i \cap D_j = \cup_{i=1}^{\infty} B_i$. For any $B_i$, set $B_i^1:= B_i\cap D^1$, $B_i^2:= B_i\cap D^2$, and $B_i^3:= B_i\cap D^3$. Let $G(y) = F_i(y) - F_j(y)$, and note that since $F_i$, $F_j$ are analytic on $B_i$, $G$ is analytic on $B_i$ also. Note that for any $y\in B_i^3$, $G(y) = 0$. If $m(B_i^3) > 0$, then, $m(\{y\in B_i| G(y) = 0\})>0$, and since $G$ is analytic, $G\equiv 0$ on $B_i$. This is a contradiction, since this implies $B_i = B_i^2$ and $m(B_i^3) = 0$. Thus $m(B_i^3) = 0$, and $D^3 = \cup B_i^3$ is a null set.

$D^1, D^2, D^3$, and $\partial D_i\cup \partial D_j$ partition $[x,x']$, a closed and bounded set. $D= D^1\cup D^2$ is an open set in the interior of $[x,x']$ and $\partial D_i\cup\partial D_j\cup D^3$ is null and thus has no interior. Thus $\partial D = \partial D_i\cup\partial D_j\cup D^3$, a null set.

Since $D^3$ points are boundary points of $D^1$, we have that $\partial D_i$, $\partial D_j$, and $D^3$ are all boundary points of $D$. Since $D^1$, $D^2$, $D^3$, and $\partial D_i \cup \partial D_j$ partition $[x,x']$, and $D$ is an open set in the interior of $[x,x']$, we have $\partial D = D^3 \cup \partial D_i \cup \partial D_j$, a null set.
\end{proof}

\section{Proof of Theorem~\ref{thm3}}\label{Proof:Thm3}
\begin{proof}
Suppose the suppositions of the theorem. We may assume that $x'=0$, $x\geq 0$, for otherwise we may use the transformation technique applied in theorem $2$. Further suppose $x\neq x'$, for otherwise the result is trivial. Denote the open region where $F$ is $C^1$ by $D$. Note that since each layer of $F$ is a Lipschitz function, $F$ is Lipschitz.

We now turn to a useful lemma, but before we do, we give the following definitions. For a given $i$, $y\in[x,x']$, we define a function that travels from one side of the rectangle $[x,x']$, through $y$, and to the other side, while varying only in the $i^{th}$ component. Formally, define $\l^{(y,i)}(t)$ with $0\leq t\leq |x_i-x_i'|$ as such: $\l^{(y,i)}_i(t) = x'_i + \text{sign} (x_i-x_i')  t$, and $\l^{(y,i)}_j(t) = y_j$ for $j \neq i$. We say that $F$ is non-decreasing from $x'$ to $x$ in it's $i^{th}$ component if, for all $y\in [x,x']$, $F \circ \l^{(y,i)}$ is non-decreasing in $t$.

\begin{lemma}\label{lemma4}
Let $A$ satisfy linearity, completeness, sensitivity(b), and NDP. Suppose $F\in\F$ is Lipschitz continuous and non-decreasing from $x'$ to $x$ in its $i^{th}$ component. Then $A_i(x,x',F)\geq 0$.
\end{lemma}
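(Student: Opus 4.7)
The plan is to upgrade the hypothesis ``$F$ is non-decreasing from $x'$ to $x$ in its $i$-th component'' to the stronger hypothesis required by NDP (``non-decreasing from $x'$ to $x$ along every monotone path'') by adding to $F$ an auxiliary linear function $H$ that corrects the possible monotonicity defect of $F$ in the coordinates $j\neq i$ while leaving $\partial F/\partial x_i$ unchanged. Linearity and sensitivity(b) will then ensure $H$ contributes nothing to $A_i$, and NDP applied to $F+H$ will yield the desired inequality.

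Concretely, I would let $L$ be a Lipschitz constant of $F$, set $s_j := \mathrm{sign}(x_j-x_j')$, and define
$$H(y) := L\sum_{j\neq i} s_j\, y_j, \qquad G := F + H.$$
The function $H$ is linear, so it lies in whatever function class is under consideration, and $\partial H/\partial y_i \equiv 0$, so $G$ inherits the $i$-th component monotonicity of $F$. For $j\neq i$, Rademacher's theorem gives $|\partial F/\partial y_j| \leq L$ almost everywhere, so $s_j\,\partial G/\partial y_j = s_j\,\partial F/\partial y_j + L \geq 0$ a.e. Translated into integrated form (Fubini plus absolute continuity of the Lipschitz $F$ along axis-parallel segments), this says that $G$ is non-decreasing along every axis-aligned segment oriented in the ``$x'$ to $x$'' direction. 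For an arbitrary monotone path $\gamma$ from $x'$ to $x$, each coordinate $\gamma_j$ moves monotonically from $x_j'$ to $x_j$, so writing $G\circ\gamma$ as the absolutely-continuous integral of $\sum_j \partial_j G(\gamma(t))\,\gamma_j'(t)$ exhibits a non-negative integrand; hence $G\circ\gamma$ is non-decreasing. Thus $G$ is non-decreasing from $x'$ to $x$ in the sense NDP requires.

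NDP then gives $A(x,x',G) \geq 0$ componentwise, so in particular $A_i(x,x',G) \geq 0$. By linearity, $A_i(x,x',F) = A_i(x,x',G) - A_i(x,x',H)$, and sensitivity(b) applied to the linear $H$ (whose $i$-th partial is identically zero) gives $A_i(x,x',H) = 0$. Combining, $A_i(x,x',F) \geq 0$. The main obstacle is that $F$ is only Lipschitz rather than $C^1$: the pointwise-derivative reasoning above must be phrased via Rademacher's theorem and absolute continuity of $F$ on paths in order to conclude monotonicity of $G\circ\gamma$ from the a.e.\ sign of $\nabla G$. The Lipschitz hypothesis is precisely what makes this step go through and simultaneously fixes the scale of the correction constant $L$ used in $H$.
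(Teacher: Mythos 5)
Your proof is correct and is essentially the paper's own argument: the paper likewise picks a vector $c$ with $c_i=0$ (your $H$ with coefficients $L\,\mathrm{sign}(x_j-x_j')$ is exactly such a choice) so that $G=F+c^Ty$ is non-decreasing from $x'$ to $x$, applies NDP to $G$, and removes the correction from the $i$-th attribution via linearity and sensitivity(b). The only soft spot is your justification that $G\circ\gamma$ is non-decreasing via the pointwise chain rule, which can fail on a positive-measure set of $t$ when the path runs inside the non-differentiability set of the Lipschitz $G$; the cleaner route (implicit in the paper) is to connect $\gamma(t)$ to $\gamma(t')$ by axis-parallel moves, each oriented from $x_j'$ toward $x_j$, and use componentwise monotonicity directly.
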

\begin{proof}
Since $F$ is Lipschitz, there exists $c$ with $c_i = 0$ such that for each $j$, $F +c^Ty$ is non-decreasing from $x'$ to $x$ in the $j^{th}$ component. Set $G(y) = F(y)+c^Ty$. For any monotone path $\g$ from $x'$ to $x$, if $t\leq t'$ then $G \circ \g (t) \leq G \circ \g (t')$, implying $G$ is non-decreasing from $x'$ to $x$. Note that $\partial_i (F-G) = -\partial_i (c^Ty) = 0$. Thus, by Dummy, $A_i(x,x',F) = A_i(x,x',F - G) + A_i(x,x',G) = A_i(x,x',G) \geq 0$.
\end{proof}

Our goal now is to construct a sequence of $C^1$ functions $\{F_m\}$ such that $\lim_{m\rightarrow \infty} A_i(x,x',F_m) = A_i(x,x',F)$. Fix $i$ and define $f = \pxi$ for $x\in D$. For $x\in\partial D$, define $f(x) = -L$, where $L$ is the Lipschitz constant of $F$. $f$ is continuous in $D$ and minimized on $\partial D$, thus $f$ is lower semi-continuous. By Baire's Theorem, there exists a monotone increasing sequence of continuous functions, $\{g_m\}$, such that $g_m\rightarrow f$ point-wise. Because $f$ is bounded, it is possible to construct this sequence as being bounded below. By the Stone-Weierstrass Theorem, for each $g_m$ there exists $\xi_m$ such that $\xi_m$ is a polynomial in $\R^n$ and $|g_m-\xi_m|<\frac{1}{m}$. Define a sequence $\{f_m\}$ with $f_m = \xi_m - \frac{1}{m}$. Thus for the sequence $\{f_m\}$ we have:
\begin{itemize}
    \item $f_m$ is $C^1$ ($C^\infty$ in fact).
    \item $f_m = \xi_m-\frac{1}{m} < g_m \leq f$.
    \item $\lim f_m = f$. Furthermore, for $x\in D$, $\lim f_m = \pxi$.
    \item There exists $k$ such that $||f_m||\leq k$ for all $m$.
\end{itemize}
That is, $\{f_m\}$ is a sequence of bounded $C^1$ under-approximations of $\pxi$ with a limit of $\pxi$.

Define $F_m(y) := \int_0^{y_i} f_m(y_{-i},t)dt$, so that $\frac{\partial F_m}{\partial x_i} = f_m$. For $y\in[0,x]$, consider $\l^{(y,i)}$. Since $\l^{(y,i)}$ is a straight line path that varies only in the $i^{th}$ component and has a velocity of 1, $\frac{d (F_m  \circ  \l^{(y,i)})}{d t}$ is analogous to the partial derivative of $F_m$ with respect to $i$.  Specifically, $\frac{d (F_m  \circ  \l^{(y,i)})}{d t} = (f_m  \circ  \l^{(y,i)})\text{sign}(x_i-x_i') = f_m  \circ  \l^{(y,i)}$. Similarly, on $D$ we have $\frac{d (F  \circ  \l^{(y,i)})}{d t} = (f  \circ  \l^{(y,i)})\text{sign}(x_i-x_i') = f  \circ  \l^{(y,i)}$.

Now, when $\l^{(y,i)}$ is on the region $D$ we have, 

$$\frac{d (F  \circ  \l^{(y,i)})}{d t} = \pxi  \circ  \l^{(y,i)} = f  \circ  \l^{(y,i)}> f_m  \circ  \l^{(y,i)} = \frac{d (F_m  \circ  \l^{(y,i)})}{d t} $$
where the inequality is gained because $\l^{(y,i)}$ is a strictly increasing function in this case, and $f>f_m$ by construction.

Note $F  \circ  \l^{(y,i)}$ is Lipschitz with Lipschitz constant $L$. If  $\frac{d (F  \circ  \l^{(y,i)})}{d t}$ exists on $\partial D$, then when $\l^{(y,i)}$ is on the region $\partial D$ we have

$$\frac{d (F  \circ  \l^{(y,i)})}{d t}\geq -L = f  \circ  \l^{(y,i)} > f_m  \circ  \l^{(y,i)} = \frac{d( F_m  \circ  \l^{(y,i)})}{d t}$$

This implies $\frac{d (F  \circ  \l^{(y,i)})}{d t} - \frac{d( F_m  \circ  \l^{(y,i)})}{d t}$ is non-negative where it exists.

Finally, $F  \circ  \l^{(y,i)}$ is Lipschitz, so it's derivative exists almost everywhere, and $\int_0^\a \frac{d (F  \circ  \l^{(y,i)})}{d t} = F  \circ  \l^{(y,i)} (\a) - F  \circ  \l^{(y,i)} (0)$. From this, we gain

$$\int_0^\a \frac{d (F  \circ  \l^{(y,i)})}{d t} - \frac{d( F_m  \circ  \l^{(y,i)})}{d t} dt = F  \circ  \l^{(y,i)}(\a) - F_m  \circ  \l^{(y,i)}(\a) + F  \circ  \l^{(y,i)}(0) - F_m  \circ  \l^{(y,i)}(0)$$

The above is the integral of a non-negative function, and is thus non-decreasing in $\a$. This implies that $F-F_m$ is non-decreasing from $0$ to $x$ in the $i^{th}$ component. So $A_i(x,x',F-F_m)\geq 0$ and $A_i(x,x',F)\geq A_i(x,x',F_m)$. Employing Theorem~\ref{thm2}, we have

\begin{align*}
    &A_i(x,x',F)\\
    \geq &\underset{m \rightarrow \infty }{\lim} A_i(x,F_m)\\
    = &\underset{m \rightarrow \infty }{\lim}\int_{\g\in\G^m(x)} A_i^\g(x,F_m) d\mu^x(\g)\\
    = & \underset{m \rightarrow \infty }{\lim} \int_{\g\in\G^m(x)} \int_0^1 f_m \frac{\partial \g_i}{\partial t} dt d\mu^x(\g)\\
    = & \int_{\g\in\G^m(x)} \int_0^1 \underset{m \rightarrow \infty }{\lim} f_m \frac{\partial \g_i}{\partial t} dt d\mu^x(\g)\\
    = & \int_{\g\in\G^m(x)} \int_0^1 \frac{\partial F}{\partial \g_i} \frac{\partial \g_i}{\partial t} dt d\mu^x(\g)\\
\end{align*}

We move the limit inside the integral by the dominated convergence theorem. We can move the limit inside the interior integral because $f_m$ is bounded, $\frac{\partial \g_i}{\partial t}$ is bounded using the constant velocity path parameterization, and the interior terms have a point-wise limit of $\frac{\partial F}{\partial \g_i} \frac{\partial \g_i}{\partial t}$ almost everywhere for almost every $\g$. To move the limit into the first integral, note that for particular values of $c_i$ we can employ Lemma~\ref{lemma3} to bound $A_i^\g(x,F_m +c_iy_i) = A_i^\g(x,F_m) + c_ix_i$ above or below zero. Thus $A_i^\g(x,F_m)$ has an upper and a lower bound. Using an over-approximating sequence for $\{f_m\}$ instead of an under-approximating sequence yields the same inequality in reverse, gaining our result.

\end{proof}

\section{Proof of Theorem~\ref{thm4}}\label{Proof:Thm4}
First, we begin the case were IG may fail to be Lipschitz. Consider $F(y_1,y_2) = \max(y_2-y_1,y_1-y_2)$. Let $\epsilon > 0$. Set $x' = (0,0)$ and consider $x = (1,1+\frac{\epsilon}{2})$, $\bar{x} = (1,1-\frac{\epsilon}{2})$. Let $\g, \bar{\g}$ be the $\I$ paths corresponding to $x,\bar{x}$, respectively.

First, note that $||x-\bar{x}|| = \epsilon$. We find that $\frac{\partial F}{\partial y_1} = 1$ if $y_1>y_2$, and $\frac{\partial F}{\partial y_1} = -1$ if $y_1<y_2$. So $\I_1(x,x',F) = (x_1-x_1')\int_0^1(-1) dt = 1$, while $\I_1(\bar{x},x',F) = -1$. So $|\I_1(x,x',F) - \I_1(\bar{x},x',F)| = 2$ for $\epsilon > 0$. Thus $\I(x,x',F)$ is not Lipschitz continuous in $x$.

Now we present the proof of the second claim:
\begin{proof}
Fix $x'$ and let $F$ be such that $\nabla F$ is Lipschitz continuous with constant $L$. Since $\nabla F$ is continuous on a bounded domain, $|\pxi|$ attains a maximum on $[a,b]$, call it $M$. Choose any $x,\bar{x} \in [a,b]$. We will denote the uniform-velocity paths for $x, \bar{x}$ by $\g, \bar{\g}$, respectively. Then,

\begin{align*}
    &|\I_i(x,x',F)-\I_i(\bar{x},x',F)|\\
    =&|(x_i-x_i')\int_0^1\pxi (\g(t))dt-(\bar{x}_i-x_i')\int_0^1\pxi (\bar{\g}(t))dt|\\
    =& |(x_i-\bar{x}_i)\int_0^1\pxi (\g(t))dt-(\bar{x}_i-x_i')\int_0^1[\pxi (\bar{\g}(t)) - \pxi (\g(t))]dt|\\
    \leq& |x_i-\bar{x}_i|\int_0^1|\pxi (\g(t))|dt+|\bar{x}_i-x_i'|\int_0^1|\pxi (\bar{\g}(t)) - \pxi (\g(t))|dt\\
    \leq& ||x-\bar{x}|| M + |b_i-a_i|\int_0^1L||\g(t)-\bar{\g}(t)|| dt\\
    =& ||x-\bar{x}|| M + |b_i-a_i|L\int_0^1||(x-\bar{x})t|| dt\\
    =& (M+\frac{|b_i-a_i|}{2}L)||x-\bar{x}||
    \end{align*}

Thus $\I_i(x,x',F)$ is Lipschitz continuous with Lipschitz constant at most $M+\frac{|b_i-a_i|}{2}L$.
\end{proof}

\section{Distributional IG Satisfies Distributional Attribution Axioms}\label{app:distributionalAxioms}

Here we provide proofs that distributional IG satisfies given axioms.

\begin{proof}[Sensitivity(a)]
Suppose $X'$ varies in exactly one input, $X_i'$, so that $X_j' = x_j$ for all $j\neq i$, and $\E F(X')\neq F(x)$. Then
\begin{align*}
    \EG_i(x,X',F) &= \E \I_i(x,X',F)\\
    &= \E (F(x) - F(X'))\\
    &= F(x) - \E F(X') \neq 0
\end{align*}
The second line is gained because IG satisfies completeness and $x_j = X_j$ causes $\I_j(x,X',F)=0$ for $j\neq i$.
\end{proof}

\begin{proof}[Completeness]
\begin{align*}
    \sum_{i=1}^n \EG(x,X',F) &= \sum_{i=1}^n \E \I(x,X',F)\\
    &= \E \sum_{i=1}^n \I(x,X',F)\\
    &= \E F(x)-F(X') & (\I \text{ satisfies completeness.})\\
    &= F(x) - \E F(X')
\end{align*}
\end{proof}

\begin{proof}[Symmetry Preserving]
Suppose that for all $x$, $F(x) = F(x^*)$, $X'_i$ and $X'_j$ are exchangeable, and $x_i = x_j$. Let $1_k$ represent a vector with every component 0 except the $k^{th}$ component, which is 1. First observe:
\begin{align*}
    \pxi(x) &= \lim_{t\rightarrow 0} \frac{F(x+1_it)-F(x)}{t}\\
    &= \lim_{t\rightarrow 0}\frac{F(x^*+1_jt)-F(x^*)}{t}\\
    &= \frac{\partial F}{\partial x_j}(x^*)
\end{align*}
From this, we have
\begin{align*}
    \EG_i(x,X',F) &= \E_{X'\sim D} (x_i-X_i')\int_0^1\pxi(X'+(x-X')t)dt\\
    &= \E_{X'\sim D} (x_i-X'_i)\int_0^1\frac{\partial F}{\partial x_j}((X'+(x-X')t)^*)dt\\
    &= \E_{X'\sim D} (x_j^*-X'^*_j)\int_0^1\frac{\partial F}{\partial x_j}(X'^*+(x^*-X'^*)t)dt \hfill & (x_i=x_j^*)\\
    &= \E_{X'^*\sim D} (x_j^*-X'^*_j)\int_0^1\frac{\partial F}{\partial x_j}(X'^*+(x^*-X'^*)t)dt \hfill & (X'^*\sim D\iff X'\sim D)\\
    &= \EG_j(x^*,X'^*,F)\\
    &= \EG_j(x,X',F) \hfill &(x=x^*; X', X'^*\sim D)\\
\end{align*}
\end{proof}

\begin{proof}[NDP]
Suppose $F$ is non-decreasing from every point on the support of $D$ to $x$. Then for any $x'$ on the support of $X'$, $\I(x,x',F)\geq 0$. Thus, for any $i$
\begin{align*}
    \EG_i(x,X',F) = \E\I_i(x,X',F)\geq 0
\end{align*}
\end{proof}

\section{Additional Experiments from \Cref{sec:exp}}\label{AppendixExperiments}
\subsection{Further ImageNet Results}

To further explore the pruning based on internal neuron attributions for image patches. We pick an often referenced image for IG, a fireboat, and repeat the experiments in \Cref{sec:bd-box}. The results are shown in \Cref{fig:FireBoatImg1,fig:FireBoatImg2}.

\begin{figure}[h]
    \begin{center}
        \begin{minipage}{0.2\linewidth}
            \centering
            \includegraphics[width=\linewidth]{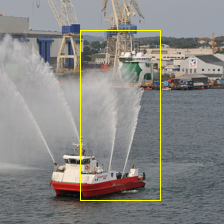}
        \end{minipage}
        \hspace{0.1in}
        \begin{minipage}{0.2\linewidth}
            \centering
            \includegraphics[width=\linewidth]{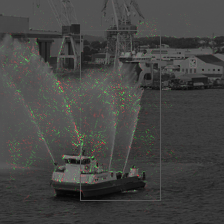}
        \end{minipage}
        \hspace{0.1in}
        \begin{minipage}{0.2\linewidth}
            \centering
            \includegraphics[width=\linewidth]{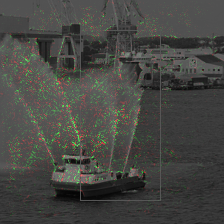}
        \end{minipage}
        \hspace{0.1in}
        \begin{minipage}{0.2\linewidth}
           \centering
           \includegraphics[width=\linewidth]{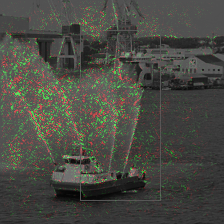}
        \end{minipage}
    \end{center}
    \caption{From left to right are images A,B,C,and D. A: The original image and bounding box indicating specified image patch. B: IG attributes visualized. Green dots show positive IG, red dots show negative IG. C: IG attributes visualized after top $1\%$ of neurons pruned based on image-patch attributions. D: IG attributes visualized after top $1\%$ neurons pruned based on the global ranking.}
    \label{fig:FireBoatImg1}
\end{figure}

\Cref{fig:FireBoatImg1} shows that pruning $1\%$ of the neurons based on targeted-pruning results in some scattering of activity, but the IG's focus on the leftmost water jets is still present. On the contrary, when pruned by global ranking, the IG is broadly scattered, and the focus on the leftmost water jest is diminished.
 
\begin{figure}[H]
    \centering
    \includegraphics{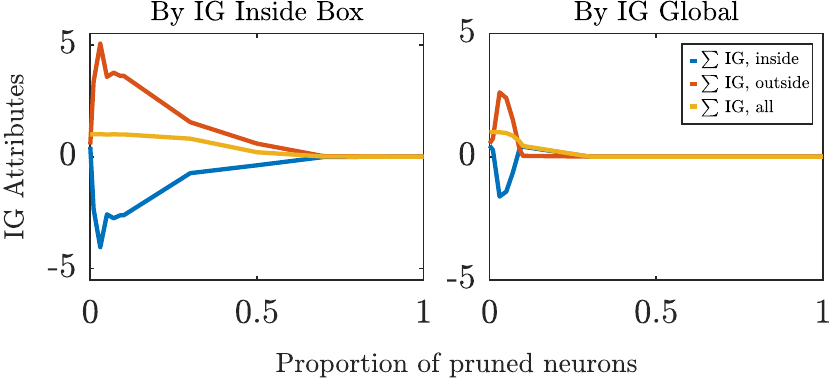}
    \caption{Sum of IG attributes inside and outside the bounding box when neurons are pruned according to certain rankings. Left: Neurons are pruned based on IG global ranking. Right: Neurons are pruned based on the IG ranking inside the bounding box. }
    \label{fig:FireBoatImg2}
\end{figure}

\Cref{fig:FireBoatImg2} reinforces the observations we have in \Cref{fig:TrafficLightExperiment}. We see that pruning by IG rankings inside the bounding box make the IG sum inside the box more negative and outside the box more positive compared to the pruning by global ranking. The observation again supports that image-patch based ranking gives higher ranks to the neurons that are responsible for positive IG inside the box.

\subsection{Fashion MNIST Results}

Here we present experiments on internal neuron attributions with a custom model trained on the Fashion MNIST data set. Information about the model can be found in the Appendix~\ref{MNISTModelInfo}.

In this experiment we identify a sub-feature common to each image in the category \textit{Sneaker}: the heel. We stipulate a bounding box for the heel, seen in Figure~\ref{fig:ShoesWithBox}. For each \textit{Sneaker} image we calculate image-patch attributions for neurons in the second to last dense layer. To calculate the each neuron's rank, we average it's attributions over all \textit{Sneaker} images. We then progressively prune based on rankings while noting the IG sums inside and outside the bounding box.

\begin{figure}[h]
    \centering
    \begin{minipage}{.2\linewidth}
        \centering
        \includegraphics[width=\linewidth]{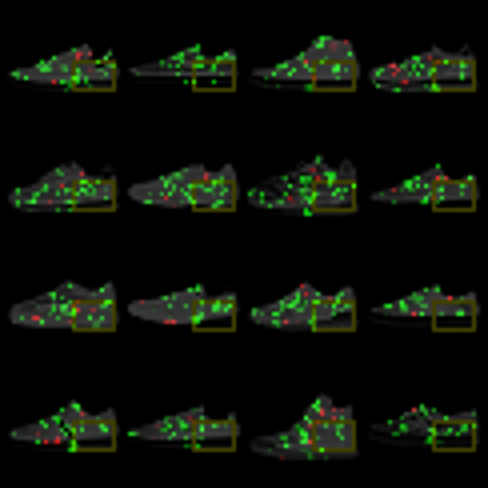}
    \end{minipage}%
    \hspace{0.05\linewidth}
    \begin{minipage}{.2\linewidth}
        \centering
        \includegraphics[width=\linewidth]{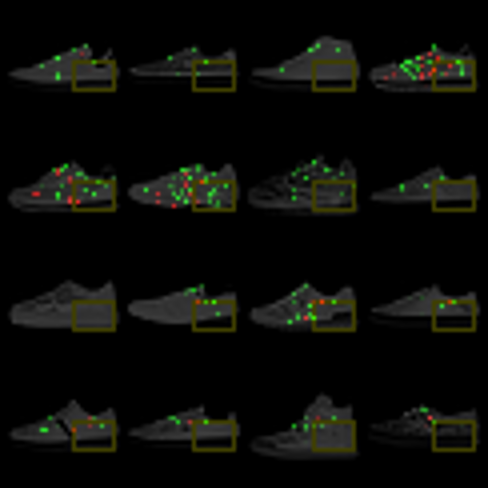}
    \end{minipage}%
    \caption{Left: IG attributes with respect to the sneaker images. Green dots show positive IG attributes and red dots show negative IG attributes. Bounding boxes are shown in yellow. Right: Recomputed IG attributes with respect to the sneaker images after internal neurons are pruned.}
    \label{fig:ShoesWithBox}     
\end{figure}

\begin{figure}[H]
    \centering
    \includegraphics{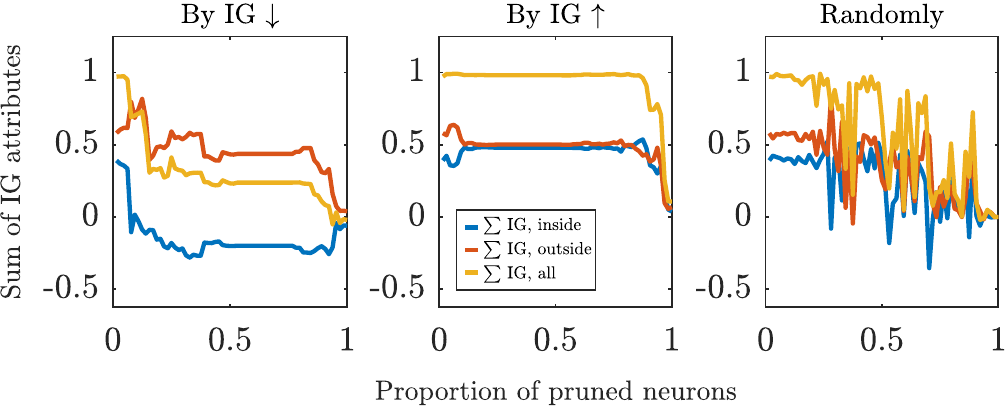}
    \caption{Summation of the recomputed IG attributes inside the bounding box, outside the bounding box and both. Summations are averaged over 64 samples chosen from testing set. Left: Pruning by the IG ranking in descending order. Middle: Pruning by the IG ranking in ascending order. Right: Randomly pruning.}
    \label{fig:my_label}
\end{figure}

When we prune in descending order, the average IG sum inside the box initially drops while the sum outside the box increases. A gap between the IG sums widens, and is sustained through the pruning process. This shows that the pruning targeted neurons that contributed to positive IG values in the box. Thus the regional IG accurately identified neurons postively associated with the heel region.

\section{Model Architecture and Training Parameters}\label{MNISTModelInfo}
Table~\ref{tab:net-arch} presents the architecture of the model used in the MNIST experiments.
\label{app:model-arch}

\begin{table}[H]
\centering
\begin{tabular}{@{}ll@{}}
\toprule
Layer Type           & Shape               \\ \midrule
Convolution $+$ tanh & $5 \times 5 \times 5$     \\ 
Max Pooling          & $2 \times 2$                \\ 
Convolution $+$ tanh & $5 \times 5 \times 10$ \\ 
Max Pooling          & $2 \times 2$             \\
Fully Connected $+$ tanh & $160$ \\ 
Fully Connected $+$ tanh & $64$ \\ 
Softmax & $10$ \\ \bottomrule
\end{tabular}
\caption{Model Architecture for the Fashion MNIST dataset}\label{tab:net-arch}
\end{table}

\end{document}